\documentclass{article}

\usepackage{microtype}
\usepackage{graphicx}
\usepackage{subcaption}
\usepackage{booktabs} 

\usepackage{xurl}
\usepackage{hyperref}

\usepackage[accepted]{icml2026}

\usepackage{amsmath}
\usepackage{amssymb}
\usepackage{mathtools}
\usepackage{amsthm}

\usepackage{epigraph}

\usepackage[capitalize,noabbrev]{cleveref}

\usepackage{booktabs}
\usepackage{multirow}
\usepackage{graphicx}
\usepackage[table]{xcolor}

\theoremstyle{plain}
\newtheorem{theorem}{Theorem}[section]
\newtheorem{proposition}[theorem]{Proposition}
\newtheorem{lemma}[theorem]{Lemma}
\newtheorem{corollary}[theorem]{Corollary}
\theoremstyle{definition}
\newtheorem{definition}[theorem]{Definition}
\newtheorem{assumption}[theorem]{Assumption}
\theoremstyle{remark}

\usepackage[textsize=tiny]{todonotes}

\icmltitlerunning{Efficient Multi-Agent Reasoning via Confidence-Guided Adaptive Debate}

\begin{document}

\makeatletter
\def\@fnsymbol#1{\ensuremath{\ifcase#1\or \dagger\or \ddagger\or \mathsection\or \mathparagraph\or \|\or **\or \dagger\dagger \or \ddagger\ddagger \else\@ctrerr\fi}}
\makeatother

\twocolumn[
  \icmltitle{Efficient Multi-Agent Reasoning via Confidence-Guided Adaptive Debate}



  \icmlsetsymbol{equal}{*}

  \begin{icmlauthorlist}
    \icmlauthor{Seungdong Yoa}{lg}
    \icmlauthor{Ye Seul Sim}{lg}
    \icmlauthor{Suhee Yoon}{lg}
    \icmlauthor{Sanghyu Yoon}{lg}
    \icmlauthor{Dongmin Kim}{lg}
    \icmlauthor{Soonyoung Lee}{lg}\\
    \icmlauthor{Bumsoo Kim$^{\dagger,}$}{cau}
    \icmlauthor{Junhyun Lee$^{\dagger,}$}{hufs,nf}
  \end{icmlauthorlist}

  
  \icmlaffiliation{lg}{LG AI Research, Seoul, South Korea}
  \icmlaffiliation{cau}{School of Computer Science and Engineering, Chung-Ang University, Seoul, South Korea}
  \icmlaffiliation{hufs}{Division of Computer Engineering, Hankuk University of Foreign Studies, Yongin, South Korea}
  \icmlaffiliation{nf}{Noah's Farm}

  \icmlcorrespondingauthor{Bumsoo Kim}{bumsoo@cau.ac.kr} 
  \icmlcorrespondingauthor{Junhyun Lee}{junhyun.lee@hufs.ac.kr}


  \icmlkeywords{Machine Learning, ICML}

  \vskip 0.3in
]

\printAffiliationsAndNotice{}

\begin{abstract}
Multi-agent debate has shown promise for improving the reasoning of large language models, yet recent theory suggests its benefits are highly regime-dependent.
While interaction can amplify informative signals under corrective conditions, symmetric debate dynamics are neutral in expectation, often making majority voting preferable. We reconcile these views by arguing that debate is effective only when invoked at the right time and with appropriate structure.
Based on this insight, we propose LASE: Leader-Adaptive Structured Engagement, a leader-centric multi-agent debate framework that selectively engages interaction only in non-neutral regimes.
LASE introduces an asymmetric leader–supporter architecture that enables directed information flow and selective signal amplification, while defaulting to simple aggregation otherwise. Experiments across diverse reasoning benchmarks show that LASE achieves multi-agent-level performance with near single-agent token cost, substantially improving efficiency over static debate and voting baselines.

\end{abstract}
\section{Introduction}
\label{sec:intro}

\begin{figure}[!t]
    \centering
    \tiny
    \begin{subfigure}{\columnwidth}
        \centering
        \includegraphics[width=0.7\columnwidth]
        {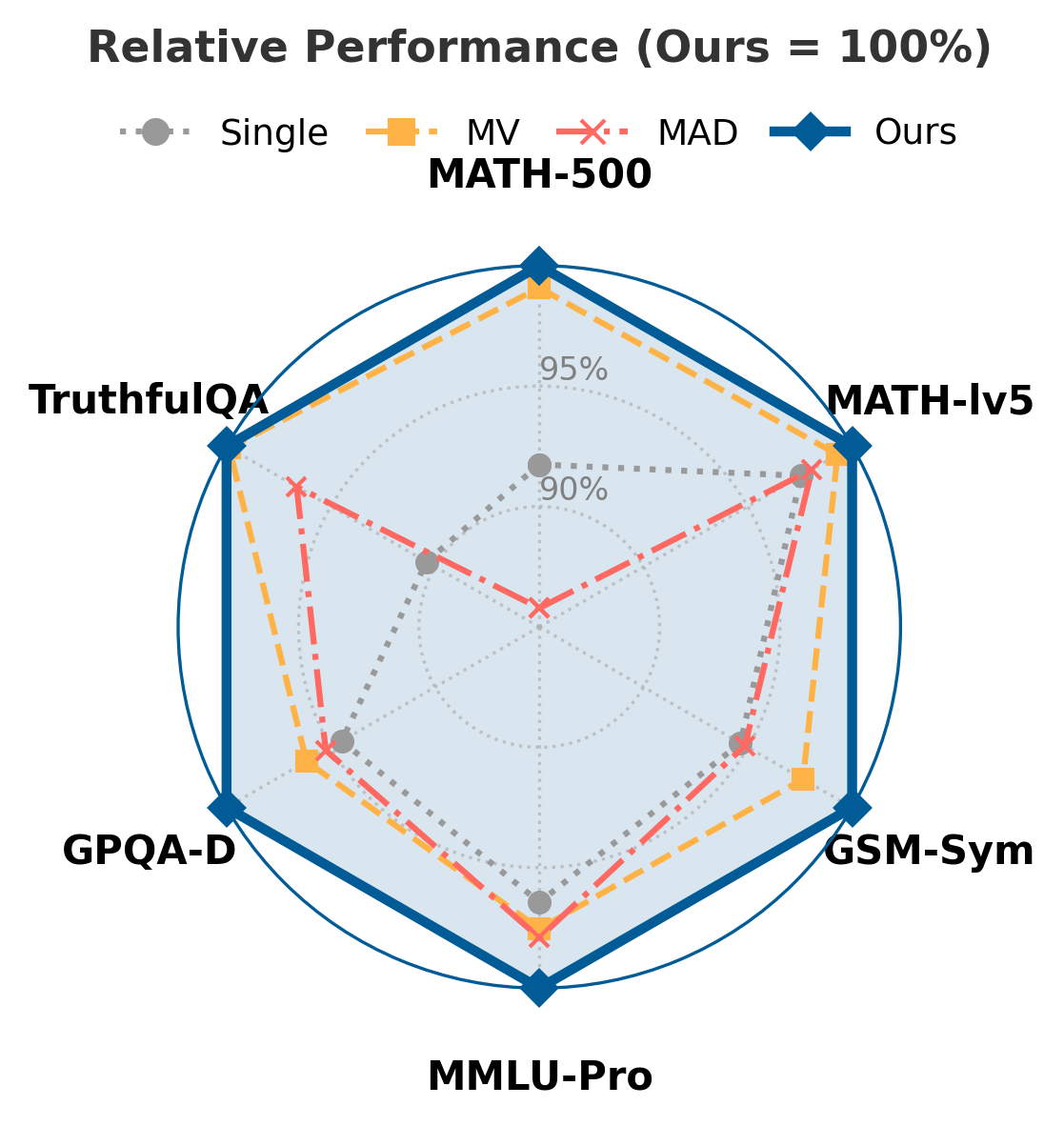}
    \end{subfigure}
    
    \begin{subfigure}{\columnwidth}
        \centering
        \includegraphics[width=0.7\columnwidth]{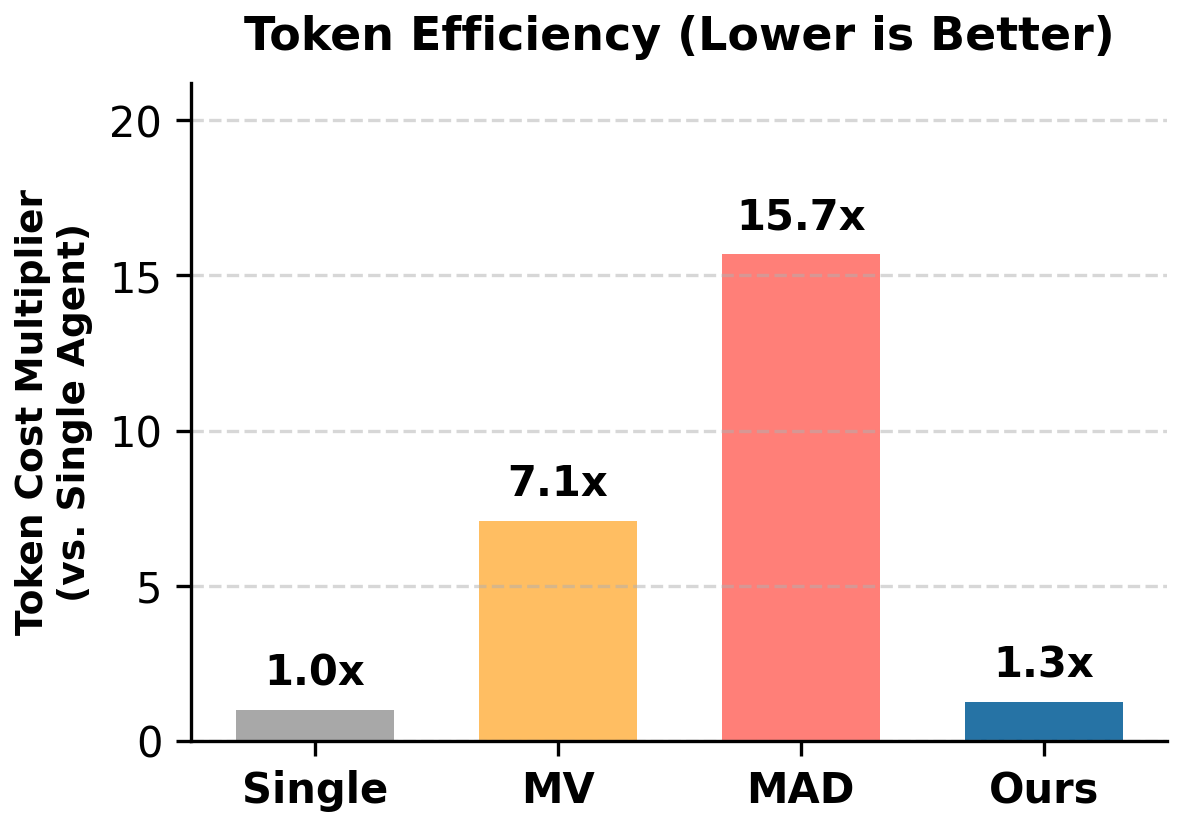}
    \end{subfigure}
    
    \vspace{-0.1cm} 
    \caption{\textbf{Performance and Efficiency Overview.} 
    (Top) Relative performance comparison (Ours=100\%). Ours consistently outperforms baselines across all tasks.
    (Bottom) \textbf{Token cost} multipliers vs. Single Agent. Ours incurs minimal overhead compared to the computationally expensive MAD.}
    \label{fig:teaser_performance}
    \vspace{-5mm} 
\end{figure}

Multi-agent debate has emerged as a promising paradigm for improving the reasoning performance of large language models (LLMs), motivated by the intuition that interaction among agents can correct individual errors and refine collective beliefs.
In its canonical form, multi-agent debate combines two ingredients: iterative inter-agent interaction and a final aggregation mechanism, often majority vote.
While empirically successful in some settings, recent theoretical work has revealed a nuanced and seemingly contradictory picture of \textit{when and why} debate is effective.

Two recent lines of work offer complementary theoretical views of multi-agent debate.
\citet{hu2025multiagent} model debate as Bayesian inference over latent concepts, showing that when agent responses are conditionally independent given an unobserved concept, interaction corresponds to posterior updating and can amplify a correct seed, yielding strictly improved expected correctness and, under mild conditions, outperforming majority vote.
In contrast, \citet{choi2025debate} analyze debate in a discrete-choice Bayesian framework with Dirichlet-distributed beliefs updated via neighbor counts and show that, under symmetry assumptions equating agents’ beliefs with their neighbors’ average, debate dynamics form a martingale that preserves expected correctness without inducing positive drift.

Taken together, these theories point to a shared conclusion: \textbf{debate is effective only in specific regimes}.
When interaction introduces meaningful asymmetry—such as the presence of a correct or highly informative response—debate can act as a corrective mechanism that amplifies useful signals.
In contrast, when agents are symmetric and information flow is unconstrained, debate primarily redistributes existing information and yields little benefit beyond ensembling.
Crucially, the distinction between these regimes is not intrinsic to the agents themselves, but to the \textit{timing} and \textit{structure} of interaction.

This perspective reframes the central challenge of multi-agent debate.
Rather than asking whether debate is beneficial in general, the more fundamental question is when debate should be invoked and how it should be structured to exploit corrective conditions while avoiding neutral ones.
Existing frameworks typically apply debate uniformly across instances, implicitly conflating regimes in which interaction is theoretically advantageous with those in which it is not.
As a result, debate is often employed even when its expected contribution is minimal, incurring computational overhead without corresponding gains in accuracy.

In this work, we build on these theoretical insights to develop LASE: Leader-Adaptive Structured Engagement, a leader-centric multi-agent debate framework that selectively invokes additional interactions only when a \textit{non-neutral} regime is likely.
Our approach explicitly targets instances where interaction is likely to be corrective and structures information flow to selectively amplify informative signals, while defaulting to simpler aggregation when debate is expected to be neutral.
By aligning the use of debate with its underlying theoretical regimes, our framework reconciles prior results and provides a principled foundation for more effective and efficient multi-agent reasoning.
We propose a leader–supporter multi-agent framework in which a designated leader agent maintains a coherent global reasoning trajectory, while supporter agents provide auxiliary signals that influence, but do not symmetrically override, the leader’s decision process.
This asymmetric interaction breaks the symmetry inherent in prior MAD systems and introduces a directed flow of information, enabling supporter feedback to function as a corrective bias rather than a redundant exchange.

LASE can be viewed as imposing a structured update rule over the debate process: the leader aggregates supporter inputs under a constrained decision policy, transforming debate from an expectation-preserving interaction into a biased yet controlled reasoning operator.
Empirically, we show that this design recovers the benefits of collective reasoning while avoiding the inefficiencies of unstructured debate, resulting in more robust and scalable multi-agent inference.

We evaluate LASE on a suite of challenging reasoning benchmarks spanning mathematical, symbolic, and knowledge-intensive tasks.
Experimental results show that LASE achieves multi-agent-level performance with near single-agent token cost, substantially reducing token usage compared to static multi-agent debate and majority-voting baselines.
These findings highlight confidence-guided test-time adaptation as a promising direction for scalable and efficient multi-agent reasoning.

\section{Preliminaries}
\label{sec:preliminary}

We focus on the problem of test-time multi-agent inference, where a system of agents collaborates to solve tasks given input text.
In this section, we formalize the problem setup and review two complementary theoretical perspectives on debate dynamics.
We then present a unified view that motivates our proposed structured framework.

\subsection{Problem Formulation}
\label{subsec:setup}

Let $x$ denote an input instance (e.g., a question or problem context) and $y \in \mathcal{Y}$ be the corresponding ground-truth output. 
We consider a system of $n$ agents indexed by $i \in \{1,\dots,n\}$. 
At each debate round $t \in \{0,\dots,T\}$, agent $i$ outputs a response $z_i^{(t)}$ based on the input and the debate history $Z^{0:t-1}$. 
Let $Z^{(t)}=(z_1^{(t)},\dots,z_n^{(t)})$ denote the collection of responses at round $t$. A deterministic extractor $e(\cdot)$ maps a response to a discrete answer $\hat{y}_i^{(t)} := e(z_i^{(t)})$. 

We compare two primary aggregation strategies. The first is \textit{Majority Voting} (MV), defined as $\mathrm{MV}(Z^{(t)}) = \mathrm{mode}(\{\hat{y}_i^{(t)}\}_{i=1}^n)$. Under classical assumptions where individual accuracy satisfies $p > 0.5$, majority voting provably improves correctness as $n$ increases. The second is the \textit{Debated Outcome}, denoted as $D(Z^{0:T})$, which represents the final consensus derived after iterative interaction among agents.

\subsection{Prior Theoretical Perspectives: When Debate Helps vs.\ When Vote Suffices}
\label{subsec:prior_theory}

Two recent lines of work provide complementary (and seemingly opposing) theoretical accounts of multi-agent debate. 
One formalizes debate as Bayesian inference over latent concepts and proves conditions under which interaction improves over majority vote \citep{hu2025multiagent}. 
The other models debate as Bayesian updating in a discrete-choice (Dirichlet-multinomial) setting and shows that, under symmetry, debate can be \emph{neutral} in expectation (a martingale), explaining why voting may dominate in practice \citep{choi2025debate}.

\paragraph{Latent-concept Bayes view (Debate can help).}
\citet{hu2025multiagent} introduce a latent concept $\theta \in \Theta$ governing agent responses and assume conditional independence given $\theta$. For simplicity, we suppress agent parameters and absorb conditioning on the input and debate history into the latent concept.

\begin{assumption}[Conditional independence on latent concepts; \citet{hu2025multiagent}]
\label{ass:hu_ci_concise}
Conditioned on $\theta$, an agent's next response distribution is independent of both the debate history and the input:
\begin{equation}
P(z_i^{(t+1)} \mid \theta, x, Z^{0:t}) = P(z_i^{(t+1)} \mid \theta). 
\end{equation}
\end{assumption}

Under this assumption, each agent's response distribution can be written as a Bayesian mixture over $\theta$ (posterior updating using peers' outputs).

\begin{lemma}[Bayesian mixture / posterior update; \citet{hu2025multiagent}]
\label{lem:hu_update_concise}
Under Assumption~\ref{ass:hu_ci_concise}, for agent $i$,
\begin{align}    
P( & z_i^{(t+1)} \mid x, Z^{0:t}) \propto  \\
&\sum_{\theta\in\Theta} P(z_i^{(t+1)} \mid \theta)\,P(\theta)\,P(x\mid\theta)\prod_{j=1}^n P(z_j^{(t)}\mid\theta). \nonumber
\end{align}
\end{lemma}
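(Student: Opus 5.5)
The plan is to marginalize over the latent concept $\theta$ and then apply Bayes' rule to the posterior over $\theta$. First, by the law of total probability,
\[
P(z_i^{(t+1)} \mid x, Z^{0:t}) = \sum_{\theta\in\Theta} P(z_i^{(t+1)} \mid \theta, x, Z^{0:t})\,P(\theta \mid x, Z^{0:t}).
\]
Assumption~\ref{ass:hu_ci_concise} then collapses the first factor to $P(z_i^{(t+1)}\mid\theta)$. The whole problem therefore reduces to showing that the posterior satisfies
\[
P(\theta \mid x, Z^{0:t}) \propto P(\theta)\,P(x\mid\theta)\prod_{j=1}^n P(z_j^{(t)}\mid\theta),
\]
with a proportionality constant that does not depend on $\theta$ or on $z_i^{(t+1)}$.

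For the posterior, I would write
\[
P(\theta\mid x,Z^{0:t}) \propto P(\theta)\,P(x\mid\theta)\,P(Z^{0:t}\mid\theta,x)
\]
and then factor the history by the chain rule over rounds and agents. This uses the assumption again, applied to each agent at each round. Two points are needed here.
\begin{itemize}
\item The responses within a round are conditionally independent given $\theta$. This is the independence part of Assumption~\ref{ass:hu_ci_concise}, read as a joint statement across agents. It gives $\prod_j P(z_j^{(t)}\mid\theta)$ for the last round.
\item Under the paper's convention, the dependence on the input and on the earlier history is absorbed into $\theta$.
\end{itemize}

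The main obstacle is that a literal chain-rule factorization yields a product over \emph{all} rounds $0,\dots,t$, not just round $t$. To recover exactly the stated form, I would take the Markov/absorption reading. Under it, $P(\theta)$ plays the role of the prior already updated on $Z^{0:t-1}$ (that is, the "current" belief over concepts entering round $t$), so only the most recent round contributes a fresh likelihood term.

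Under that reading, the following three steps combine. Substituting the posterior into the marginalization and dropping the $\theta$-independent normalizer $P(x,Z^{0:t})$ gives the displayed proportionality, since the left-hand side is a distribution over $z_i^{(t+1)}$. The normalizing constant is $\sum_\theta P(\theta)P(x\mid\theta)\prod_j P(z_j^{(t)}\mid\theta)$. I would state explicitly that the proportionality is in $z_i^{(t+1)}$.
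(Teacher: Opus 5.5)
The paper gives no proof of this lemma; it is quoted from \citet{hu2025multiagent}, so your argument has to stand on its own. Its core is the standard derivation and is correct: marginalize over $\theta$, replace $P(z_i^{(t+1)}\mid\theta,x,Z^{0:t})$ by $P(z_i^{(t+1)}\mid\theta)$, and get the posterior by Bayes' rule. This uses within-round independence across agents as a joint hypothesis, which, as you rightly note, the displayed per-agent assumption does not supply by itself. The obstacle you found is genuine, not cosmetic. With $P(\theta)$ a fixed prior and the full history $Z^{0:t}$ in the conditioning, the displayed identity is false in general. Take $n=1$, two equiprobable concepts with $P(z=1\mid\theta_1)=0.9$ and $P(z=1\mid\theta_2)=0.1$, uninformative $x$, and observations $z^{(0)}=1$, $z^{(1)}=0$. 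The true predictive is $P(z^{(2)}=1\mid x,Z^{0:1})=0.5$, while the right-hand side with only the round-$1$ factor gives $0.18$. So the statement itself has to be modified, and your proof should announce this as a modification rather than as a ``reading''.

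On the choice of modification: ``Markov'' and ``absorption'' are two different fixes, and you only carry out the second. Under absorption, the symbol $P(\theta)$ silently becomes the data-dependent weight $P(\theta\mid Z^{0:t-1})$. It must be updated on the earlier rounds but not on $x$, or the factor $P(x\mid\theta)$ is counted twice. Its normalizer is then $P(x,Z^{(t)}\mid Z^{0:t-1})$, not the $P(x,Z^{0:t})$ you cite. That is harmless for proportionality in $z_i^{(t+1)}$, but it shows the two readings being mixed. The cleaner fix keeps $P(\theta)$ as the genuine prior and reads the left side as $P(z_i^{(t+1)}\mid x,Z^{(t)})$. This matches the one-round product, and also how LASE itself conditions (e.g.\ $z_L^{(2)}$ on $x$, $z_L^{(1)}$, $z_S^{(0)}$ only). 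The only extra step is averaging the assumption over $Z^{0:t-1}$. This gives $P(z_i^{(t+1)}\mid\theta,x,Z^{(t)})=P(z_i^{(t+1)}\mid\theta)$ and $P(x,Z^{(t)}\mid\theta)=P(x\mid\theta)\prod_j P(z_j^{(t)}\mid\theta)$. Bayes' rule then yields the displayed form exactly, with normalizer $P(x,Z^{(t)})$.
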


A central sufficient condition is the presence of at least one response strongly consistent with the true concept (a correct seed).

\begin{assumption}[Initial correct seed; \citet{hu2025multiagent}]
\label{ass:hu_seed_concise}
At $t=0$, at least one agent produces a response that is more compatible with the true concept $\theta^\star$ than with any $\theta \neq \theta^\star$.
\end{assumption}

With a seed, iterative interaction can amplify correctness.

\begin{theorem}[Consistent response amplification; \citet{hu2025multiagent}]
\label{thm:hu_amp_concise}
If the round-$t$ pool contains a response strongly consistent with $\theta^\star$, then the expected correctness at round $t+1$ is strictly higher than in the case where no such response exists.
\end{theorem}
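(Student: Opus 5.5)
The plan is to compare two round-$t$ pools that are identical except in one slot. In $Z^{(t)}$ the slot holds a response $z^\star$ that is strongly consistent with $\theta^\star$. In $\tilde Z^{(t)}$ it holds a response $\tilde z$ that is not. We then show that the round-$(t+1)$ predictive probability of the correct answer is strictly larger under the first pool.

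\textbf{Step 1: predictive distribution.} By Lemma~\ref{lem:hu_update_concise}, the predictive distribution of each agent is
\begin{equation*}
P(z_i^{(t+1)}\mid x,Z^{0:t})=\sum_{\theta}P(z_i^{(t+1)}\mid\theta)\,\pi_t(\theta),
\end{equation*}
where
\begin{equation*}
\pi_t(\theta)\propto P(\theta)P(x\mid\theta)\prod_j P(z_j^{(t)}\mid\theta).
\end{equation*}

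\textbf{Step 2: correctness as a linear functional of the posterior.} Let $c(\theta)$ be the probability that $e(z)=y$ when $z\sim P(\cdot\mid\theta)$. Then the expected correctness of agent $i$ is $\mathbb{E}_{\pi_t}[c(\theta)]$. We need a standing modelling assumption, implicit in \citet{hu2025multiagent}: $c(\theta^\star)>c(\theta)$ for all $\theta\ne\theta^\star$, i.e.\ the true concept is the one that generates correct answers.

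\textbf{Step 3: likelihood-ratio reweighting.} Swapping $\tilde z$ for $z^\star$ multiplies each unnormalized weight by $r(\theta)=P(z^\star\mid\theta)/P(\tilde z\mid\theta)$. Strong consistency, read as in Assumption~\ref{ass:hu_seed_concise}, gives
\begin{equation*}
P(z^\star\mid\theta^\star)>P(z^\star\mid\theta)\quad\text{for }\theta\neq\theta^\star.
\end{equation*}
We also need $\tilde z$ not to favor $\theta^\star$. Together these yield $r(\theta^\star)>r(\theta)$ for every $\theta\ne\theta^\star$. Normalizing, the new posterior $\pi'_t$ satisfies $\pi'_t(\theta^\star)>\pi_t(\theta^\star)$, and for $\theta\neq\theta^\star$,
\begin{equation*}
\pi'_t(\theta)/\pi'_t(\theta^\star)<\pi_t(\theta)/\pi_t(\theta^\star).
\end{equation*}
So mass moves from every $\theta\ne\theta^\star$ toward $\theta^\star$.

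\textbf{Step 4: conclude.} Write
\begin{equation*}
\mathbb{E}_{\pi}[c]=c(\theta^\star)-\sum_{\theta\ne\theta^\star}\pi(\theta)\bigl(c(\theta^\star)-c(\theta)\bigr).
\end{equation*}
Every term in the sum is nonnegative, and every off-concept weight $\pi(\theta)$ shrinks, strictly whenever $\pi_t(\theta)>0$. Hence $\mathbb{E}_{\pi'_t}[c]>\mathbb{E}_{\pi_t}[c]$ unless $\pi_t$ was already concentrated on $\theta^\star$, which we exclude as trivial. Both off-concept claims follow from the ratio decrease in Step 3: $\pi_t'(\theta)=\pi_t(\theta)r(\theta)/\sum_{\theta'}\pi_t(\theta')r(\theta')$, and the denominator exceeds $r(\theta)$ because $r(\theta^\star)>r(\theta)$. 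Strictness for a single agent then passes to majority vote and to the average over agents, because each agent's marginal correctness increases.

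\textbf{Main obstacle.} The hard part is making the comparison in Step 3 well defined. The theorem's ``case where no such response exists'' must be formalized as a counterfactual pool whose replacement response satisfies $P(\tilde z\mid\theta^\star)\le P(\tilde z\mid\theta)$, and we need the monotonicity $c(\theta^\star)>c(\theta)$. I would state both as explicit hypotheses, following the conditions in \citet{hu2025multiagent}. Without them, one can build counterexamples in which a non-seed response still favors $\theta^\star$ more strongly than the seed does, and then the strict inequality fails.
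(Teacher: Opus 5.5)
The paper gives no proof of this statement; it is imported from Hu et al. So your argument has to stand on its own, and it fails at Step~4.

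Steps~1--3 are correct. Your hypotheses give $r(\theta^\star)>r(\theta)$ for every $\theta\neq\theta^\star$. Hence $\pi'_t(\theta^\star)>\pi_t(\theta^\star)$ (given $0<\pi_t(\theta^\star)<1$ and positive likelihoods), and each ratio $\pi'_t(\theta)/\pi'_t(\theta^\star)$ decreases. But the claim that every off-concept weight shrinks is false once $|\Theta|\ge 3$. The normalizer $\sum_{\theta'}\pi_t(\theta')r(\theta')$ is a $\pi_t$-average of $r$. It lies below the maximum $r(\theta^\star)$, but nothing places it above $r(\theta)$ for a wrong concept whose likelihood ratio is close to $r(\theta^\star)$. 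Losing mass relative to $\theta^\star$ is not losing mass in absolute terms. Once mass can flow between wrong concepts, the bare condition $c(\theta^\star)>c(\theta)$ no longer fixes the sign of the change.

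Here is a counterexample. Take $\Theta=\{\theta^\star,\theta_1,\theta_2\}$ and an uninformative replacement $P(\tilde z\mid\theta)\equiv 0.05$, so your condition on $\tilde z$ holds with equality. Set $P(z^\star\mid\theta)=(0.5,\,0.4,\,0.01)$ and $\pi_t=(0.01,\,0.09,\,0.9)$. Then $\pi'_t=(0.1,\,0.72,\,0.18)$, so the weight on $\theta_1$ grows eightfold. Take $c=(0.95,\,0.4,\,0.85)$, which is realizable with $e(z^\star)=y$ and $e(\tilde z)\neq y$. You get $\mathbb{E}_{\pi_t}[c]=0.8105$ but $\mathbb{E}_{\pi'_t}[c]=0.536$. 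Every hypothesis you listed holds, yet the seed lowers expected correctness.

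The exact identity underlying Step~4 is
\[
\mathbb{E}_{\pi'_t}[c]-\mathbb{E}_{\pi_t}[c]=\frac{\mathrm{Cov}_{\pi_t}(r,c)}{\mathbb{E}_{\pi_t}[r]}.
\]
So the missing ingredient is a hypothesis that makes this covariance positive. Knowing only that $r$ and $c$ are both uniquely maximized at $\theta^\star$ does not do this. The natural fix is an alignment (monotone-likelihood-ratio) condition: $\bigl(r(\theta)-r(\theta')\bigr)\bigl(c(\theta)-c(\theta')\bigr)\ge 0$ for all $\theta,\theta'$. Then $\mathrm{Cov}_{\pi_t}(r,c)=\tfrac12\sum_{\theta,\theta'}\pi_t(\theta)\pi_t(\theta')\bigl(r(\theta)-r(\theta')\bigr)\bigl(c(\theta)-c(\theta')\bigr)\ge 0$. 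The inequality is strict, via the pairs $(\theta^\star,\theta)$, whenever $0<\pi_t(\theta^\star)<1$.

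The simplest special case is binary correctness at the concept level: $c(\theta)=c_0$ for all $\theta\neq\theta^\star$, e.g.\ a two-concept model. There $\mathbb{E}_{\pi}[c]=c_0+\pi(\theta^\star)\bigl(c(\theta^\star)-c_0\bigr)$, and your Step~3 conclusion $\pi'_t(\theta^\star)>\pi_t(\theta^\star)$ already finishes the proof. Your ``Main obstacle'' paragraph should state one of these conditions in place of the monotonicity $c(\theta^\star)>c(\theta)$, which is too weak.

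Separately, your closing claim that strictness passes to majority vote does not follow when there are more than two answer options. Each agent's probability of $y$ can rise while the wrong-answer mass consolidates on a single competitor, which lowers plurality accuracy. The statement concerns expected correctness only, so drop that remark or argue it separately.
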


Moreover, under additional regularity conditions, the final debated outcome can exceed the initial majority vote.

\begin{theorem}[Debate can outperform majority vote; \citet{hu2025multiagent}]
\label{thm:hu_vs_mv_concise}
Under Assumptions~\ref{ass:hu_ci_concise}-\ref{ass:hu_seed_concise} and mild regularity conditions,
$
P(\mathrm{D}(Z^{0:T}) = y) > P(\mathrm{MV}(Z^{(0)}) = y).
$
\end{theorem}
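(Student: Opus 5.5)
The plan is to compare the two events through the law of total probability over the initial pool $Z^{(0)}$, and to split according to whether the majority vote at round $0$ is already correct. Write $E_{\mathrm{MV}}=\{\mathrm{MV}(Z^{(0)})=y\}$. Then
\begin{equation}
P(\mathrm{D}(Z^{0:T})=y)-P(\mathrm{MV}(Z^{(0)})=y)=\mathbb{E}\big[(P(\mathrm{D}=y\mid Z^{(0)})-\mathbf{1}_{E_{\mathrm{MV}}})\big].
\end{equation}
It then suffices to show two things. First, on pools where MV is correct, debate does not lose correctness, at least not in expectation beyond a controllable amount. Second, on pools where MV is wrong but a correct seed exists (Assumption~\ref{ass:hu_seed_concise}), debate recovers $y$ with strictly positive probability. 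The ``mild regularity conditions'' will be taken to be: the true concept $\theta^\star$ induces the answer $y$ with probability bounded away from $1/2$ under $P(z\mid\theta^\star)$; $\mathrm{D}$ is the majority vote at round $T$; and the prior and likelihood terms $P(\theta)P(x\mid\theta)$ do not dominate the data likelihood.

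\textbf{Step 1 (monotone drift toward $\theta^\star$).} Using Lemma~\ref{lem:hu_update_concise}, each agent's round-$(t+1)$ response is drawn from a mixture over $\theta$ with weights $w_t(\theta)\propto P(\theta)P(x\mid\theta)\prod_j P(z_j^{(t)}\mid\theta)$. By the seed assumption, the factor for $\theta^\star$ strictly exceeds that for any other $\theta$ on at least one response. Invoking Theorem~\ref{thm:hu_amp_concise} round by round, I would show that the expected posterior mass $\mathbb{E}[w_t(\theta^\star)]$ is nondecreasing in $t$, and strictly increasing whenever a seed is present. As a consequence, the per-agent correctness probability $p_t=\sum_\theta w_t(\theta)P(e(z)=y\mid\theta)$ satisfies $p_T\ge p_0$, with strict inequality on the seed event.

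\textbf{Step 2 (from per-agent correctness to voting).} Conditional on $\theta$, agents are independent at round $T$ by Assumption~\ref{ass:hu_ci_concise}. So $\mathrm{MV}(Z^{(T)})$ is a mixture of i.i.d.\ majority votes, and the probability of a correct majority is increasing in the per-agent accuracy (a standard Condorcet/binomial monotonicity argument). Combining this with Step 1 gives $P(\mathrm{D}=y)\ge P(\mathrm{MV}(Z^{(T)})\text{ with accuracy }p_0)$. I then need to compare this quantity with $P(\mathrm{MV}(Z^{(0)})=y)$. Round-$0$ answers are also conditionally i.i.d.\ given $\theta$, so the two agree at equality, and the strictness from the seed event yields the strict gap, provided the seed event has positive probability.

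\textbf{Main obstacle.} The hard part is Step 1's passage from the \emph{expected} increase in $w_t(\theta^\star)$ to an increase in expected \emph{majority} correctness. Majority vote is a nonlinear, threshold function of the agents' answers, so an increase in mean accuracy does not automatically transfer through the mixture. I would handle this by adding to the regularity conditions that $P(e(z)=y\mid\theta^\star)>1/2>P(e(z)=y\mid\theta)$ for $\theta\neq\theta^\star$. Under that condition the conditional majority accuracy is a monotone function of $\theta$-membership, and the improvement reduces to stochastic dominance of $w_T$ over $w_0$ toward $\theta^\star$. I would try to obtain that dominance from the likelihood-ratio structure of the product update.
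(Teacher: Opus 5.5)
The paper does not prove this statement. It imports it from \citet{hu2025multiagent} and leaves the ``mild regularity conditions'' unspecified, so your proposal has to supply conditions that force the strict inequality. The ones you chose do not, and the proof breaks in Step~1.

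Theorem~\ref{thm:hu_amp_concise} compares a round-$t$ pool that contains a seed with one that does not. It says nothing about round $t+1$ versus round $t$, and Assumption~\ref{ass:hu_seed_concise} only concerns $t=0$, so applying it ``round by round'' gives no monotonicity in $t$. More fundamentally, under Lemma~\ref{lem:hu_update_concise} the round-$0$ pool is itself drawn from the agents' predictive $\sum_\theta \nu(\theta)P(\cdot\mid\theta)$, where $\nu(\theta)\propto P(\theta)P(x\mid\theta)$. Nothing in the dynamics therefore points toward $\theta^\star$. Pools with a seed push the weights toward $\theta^\star$, pools without one push them away, and the two effects can cancel exactly. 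Your opening reduction already concedes a possible loss on some pools, and nothing later controls it.

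Here is a concrete case. Take $\Theta=\{\theta^\star,\theta'\}$, uniform $\nu$, binary answers $\{y,y'\}$ with $r(\theta):=P(e(z)=y\mid\theta)$ equal to $a>1/2$ at $\theta^\star$ and $1-a$ at $\theta'$, odd $n$, and $\mathrm{D}=\mathrm{MV}(Z^{(T)})$. Every regularity condition you list holds, including $r(\theta^\star)>1/2>r(\theta')$, and the seed event has positive probability. Yet swapping $\theta^\star\leftrightarrow\theta'$ and $y\leftrightarrow y'$ is a symmetry of the whole process, so $P(\mathrm{D}=y)=P(\mathrm{MV}(Z^{(0)})=y)=1/2$.

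If the agents instead share one latent concept per round, the reset-to-prior update makes the concept sequence a two-block Gibbs sampler for $\nu(\theta)\prod_j P(z_j\mid\theta)$ started at stationarity. Then $Z^{(T)}$ has the same law as $Z^{(0)}$ for every $\nu$, and debate is exactly neutral; this is the phenomenon of Theorem~\ref{thm:choi_martingale_concise}.

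Reading Assumption~\ref{ass:hu_seed_concise} as conditioning on the seed does not help, because $\mathrm{MV}(Z^{(0)})$ must then be evaluated on the same event and it benefits too. With $n=3$, $a=0.8$, $T=1$, conditional on a seed:
\begin{itemize}
\item with independent draws from the mixture, $\mathrm{MV}(Z^{(0)})$ is correct with probability $4/7\approx 0.571$ versus about $0.555$ after one debate round;
\item with a shared concept, the figures are $0.676$ versus $0.635$.
\end{itemize}
The missing ingredient is a hypothesis that injects asymmetry toward $\theta^\star$. Examples are an input informative enough that $\nu(\theta^\star)$ is large, round-$0$ responses generated under $\theta^\star$ rather than the predictive, or an explicit per-round amplification assumption. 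Your condition that $P(\theta)P(x\mid\theta)$ not dominate pushes in the wrong direction.

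Step~2 has separate problems.
\begin{itemize}
\item Monotonicity of the majority map $B_n(p)$ transfers to the random accuracy $p_T$ only pointwise or under first-order stochastic dominance. A larger mean is not enough, because $B_n$ is concave on $[1/2,1]$.
\item Your fix writes $\mathrm{MV}(Z^{(T)})$ as a $\theta$-mixture of i.i.d.\ majorities and uses dominance of $w_T$ toward $\theta^\star$. That presumes all agents share one draw of $\theta$. With independent draws from the mixture, the conditional majority accuracy is $B_n\big(\sum_\theta w_{T-1}(\theta)r(\theta)\big)$, not $\sum_\theta w_{T-1}(\theta)B_n(r(\theta))$. Under the shared-draw reading, the dominance you hope for fails by the stationarity remark above.
\item There is an index slip. $w_0$ already incorporates $Z^{(0)}$ and governs round-$1$ responses, whereas round-$0$ accuracy is governed by $\nu$. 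So ``the two agree at equality'' compares the wrong quantities.
\end{itemize}
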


\paragraph{Dirichlet-multinomial discrete-choice view (Debate can be neutral).}
\citet{choi2025debate} model each agent's belief over $K$ discrete options as a Dirichlet random variable updated by neighbor counts. 
Let $\alpha_{i,t} \in \mathbb{R}_+^K$ be the Dirichlet parameters for agent $i$ at round $t$, and $c_{i,t}$ be the count vector of neighbors' responses.

\begin{lemma}[Dirichlet conjugacy for neighbor updates; \citet{choi2025debate}]
\label{lem:choi_conj_concise}
With a prior $\theta_{i,t} \sim \mathrm{Dirichlet}(\alpha_{i,t-1})$, the posterior after observing neighbor counts $c_{i,t}$ remains Dirichlet with updated parameters $\alpha_{i,t} = \alpha_{i,t-1} + c_{i,t}$.
\end{lemma}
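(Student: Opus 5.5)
The plan is to prove the lemma as a direct instance of Dirichlet--multinomial conjugacy, by comparing posterior and prior kernels up to normalizing constants. First fix the observation model that the statement leaves implicit. Given $\theta_{i,t}=\theta\in\Delta^{K-1}$, the responses of agent $i$'s neighbors at round $t$ are treated as i.i.d.\ categorical draws with probabilities $\theta$, summarized by the count vector $c_{i,t}\in\mathbb{Z}_{\ge 0}^K$ with $\sum_k c_{i,t,k}=m$, the number of neighbors. Under this model the likelihood is multinomial:
\[
P(c_{i,t}\mid\theta)=\binom{m}{c_{i,t,1},\dots,c_{i,t,K}}\prod_{k=1}^K \theta_k^{c_{i,t,k}} .
\]
If the ordered sequence of neighbor responses is observed rather than only the counts, the multinomial coefficient simply disappears. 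This does not affect anything below, because the coefficient does not depend on $\theta$.

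Second, write the prior density:
\[
p(\theta)=\frac{\Gamma\bigl(\sum_k\alpha_{i,t-1,k}\bigr)}{\prod_k\Gamma(\alpha_{i,t-1,k})}\prod_k\theta_k^{\alpha_{i,t-1,k}-1}
\quad\text{on the simplex.}
\]
Applying Bayes' rule and multiplying the prior by the likelihood gives
\[
p(\theta\mid c_{i,t})\propto\prod_k\theta_k^{\alpha_{i,t-1,k}+c_{i,t,k}-1},
\]
where every factor independent of $\theta$ has been absorbed into the proportionality constant.

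Third, identify this kernel as that of $\mathrm{Dirichlet}(\alpha_{i,t-1}+c_{i,t})$. Because $\alpha_{i,t-1,k}>0$ and $c_{i,t,k}\ge 0$, every updated parameter is strictly positive, so this kernel is integrable on the simplex. Its normalizing constant is the multivariate Beta function $B(\alpha_{i,t-1}+c_{i,t})$. Since a probability density is determined by its kernel, the posterior is exactly this Dirichlet, which gives $\alpha_{i,t}=\alpha_{i,t-1}+c_{i,t}$.

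The algebra is routine, so the only real obstacle is modeling. One must state that, conditional on $\theta_{i,t}$, the neighbor responses are conditionally i.i.d.\ categorical with parameter $\theta_{i,t}$, as in \citet{choi2025debate}. Without this assumption the likelihood would not factor into the product form above. With it, the multinomial likelihood and the Dirichlet prior belong to the same exponential-family form, and the result follows immediately.
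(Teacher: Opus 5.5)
Your proof is correct. The paper gives no proof of this lemma; it simply imports it from \citet{choi2025debate}, and your kernel-matching computation is exactly the standard Dirichlet--multinomial conjugacy argument behind that citation. You also rightly make explicit the hypothesis the statement leaves implicit: given $\theta_{i,t}$, agent $i$ treats its neighbors' responses as conditionally i.i.d.\ categorical draws. Because conjugacy depends only on this assumed likelihood, the update $\alpha_{i,t}=\alpha_{i,t-1}+c_{i,t}$ holds regardless of how the neighbors' responses are actually generated.
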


Let $p_{i,t}$ denote agent $i$'s posterior mean probability assigned to the correct option derived from $\alpha_{i,t}$. Under a symmetry condition equating an agent's belief with the average belief of its neighbors, the belief trajectory is a martingale.

\begin{theorem}[Martingale neutrality under symmetry; \citet{choi2025debate}]
\label{thm:choi_martingale_concise}
If at round $t-1$ agent $i$'s belief equals the average of its neighbors' beliefs, then
$\mathbb{E}[p_{i,t} \mid \alpha_{i,t-1}] = p_{i,t-1}$.
\end{theorem}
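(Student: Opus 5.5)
The plan is a one-step conditional-expectation computation built on the conjugate update of Lemma~\ref{lem:choi_conj_concise}. Write $\alpha_{i,t-1} = (\alpha^{(1)},\dots,\alpha^{(K)})$ and let $k^\star$ index the correct option. Set $A := \sum_{k} \alpha^{(k)}$, so that $p_{i,t-1} = \alpha^{(k^\star)}/A$. Suppose agent $i$ has $d_i$ neighbors; I take this number to be fixed given $\alpha_{i,t-1}$. Each neighbor $j$ reports an option drawn from its own belief. Conditional on $\alpha_{i,t-1}$, the neighbor reports are independent, and neighbor $j$ selects $k^\star$ with probability equal to its posterior mean $p_{j,t-1}$.

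By Lemma~\ref{lem:choi_conj_concise}, $\alpha_{i,t} = \alpha_{i,t-1} + c_{i,t}$ with $\sum_k c_{i,t}^{(k)} = d_i$. The new posterior mean is therefore
\[
p_{i,t} = \frac{\alpha^{(k^\star)} + c_{i,t}^{(k^\star)}}{A + d_i}.
\]
The key simplification is that the denominator is deterministic given $\alpha_{i,t-1}$, because each neighbor contributes exactly one count. Expectation then passes through the fraction, leaving only $\mathbb{E}[c_{i,t}^{(k^\star)} \mid \alpha_{i,t-1}]$ to compute.

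Linearity over neighbors gives $\mathbb{E}[c_{i,t}^{(k^\star)} \mid \alpha_{i,t-1}] = \sum_{j \in \mathcal{N}(i)} p_{j,t-1}$. The symmetry hypothesis states that $p_{i,t-1} = \frac{1}{d_i}\sum_{j\in\mathcal{N}(i)} p_{j,t-1}$, so this sum equals $d_i\, p_{i,t-1}$. Substituting,
\[
\mathbb{E}[p_{i,t} \mid \alpha_{i,t-1}] = \frac{\alpha^{(k^\star)} + d_i\, p_{i,t-1}}{A + d_i} = \frac{A\, p_{i,t-1} + d_i\, p_{i,t-1}}{A+d_i} = p_{i,t-1}.
\]
This is exactly the claimed identity.

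The main obstacle is making the probabilistic model precise enough for the expectation step to be legitimate. Two points need care. First, the neighbors' reporting probabilities must be measurable with respect to the conditioning, that is, determined by round-$(t-1)$ beliefs. Second, the quantity being averaged must be the posterior mean $p_{j,t-1}$ itself, not a sampled $\theta_j$. If neighbors instead sample $\theta_{j}\sim\mathrm{Dirichlet}(\alpha_{j,t-1})$ and then an answer from $\theta_j$, I would use the tower property: integrating out $\theta_j$ returns the mean $p_{j,t-1}$, so the same count expectation holds. The argument also needs $d_i$ not to be random given $\alpha_{i,t-1}$. If the graph were random, I would condition on it as well and apply the tower property once more.
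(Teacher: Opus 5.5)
Your proof is correct. It is the standard one-step computation behind the result of \citet{choi2025debate}: the denominator $A+d_i$ is fixed given round-$(t-1)$ quantities, linearity gives $\mathbb{E}[c_{i,t}^{(k^\star)}]=\sum_{j}p_{j,t-1}$, and the averaging hypothesis closes the argument. The paper itself does not prove this statement. It imports the theorem from \citet{choi2025debate} and, in the appendix, only restates it as a sufficient condition, so there is no in-paper proof to compare against.

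Your conditioning concern resolves cleanly. First condition on the full round-$(t-1)$ state (all $\alpha_{j,t-1}$ and the graph), where the hypothesis gives conditional expectation $p_{i,t-1}$. Then apply the tower property, since $p_{i,t-1}$ is a function of $\alpha_{i,t-1}$; this requires the averaging hypothesis to hold pathwise, or at least in conditional expectation given $\alpha_{i,t-1}$. Independence of the neighbor reports is never used, only linearity of expectation.
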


This result implies that in homogeneous fully-connected settings, debate alone does not induce positive expected drift toward correctness. In such settings, the debate dynamics are approximately \textit{expectation-preserving}: $\mathbb{E}[\bar{b}_{t+1} \mid s_t] \approx \bar{b}_t$. That is, while debate may affect variance or inter-agent correlation, it does not systematically increase expected correctness, providing a mathematical explanation for empirical observations where unstructured debate offers limited gains over majority voting.

\subsection{Unified Interpretation: Breaking Symmetry via Structured Debate}
\label{subsec:unified_view}

The two theories are compatible: Theorem~\ref{thm:choi_martingale_concise} characterizes a \emph{neutral regime} where symmetric information flow yields zero expected improvement, rationalizing why majority vote can dominate empirically. In contrast, Theorems~\ref{thm:hu_amp_concise}-\ref{thm:hu_vs_mv_concise} identify a \emph{corrective regime} in which asymmetric likelihood signals---most simply, the existence of at least one correct seed---create positive drift via posterior amplification.

This reconciliation motivates methods that target interaction to instances where corrective conditions are likely to hold. 
To reliably outperform expectation-preserving interactions, we must introduce \emph{structure} that breaks symmetry and induces a biased update toward correctness.
We formalize this goal as:
\begin{equation}
\bar{b}_{t+1} = \bar{b}_t + \Delta_t, \quad \text{where} \quad \mathbb{E}[\Delta_t \mid Z^{0:t}] > 0.
\end{equation}
Here, $\bar{b}_t = \frac{1}{n} \sum_{i=1}^n p_{i,t}$ represents the \textbf{average belief} of the agent population in the correct answer.

To achieve $\Delta_t > 0$, we propose a \textbf{Structured Debate} protocol via role differentiation. 
Let $\mathcal{S} \subset \{1, \dots, n\}$ denote a set of \textit{Supporters} who generate auxiliary signals $u_j$ (e.g., critiques or verifications).
By designating a \textit{Leader} ($L$) to synthesize auxiliary signals from \textit{Supporters} ($\mathcal{S}$), we enforce a directed aggregation function $S_L$:
\begin{equation}
    \hat{y} = \arg\max_{y' \in \mathcal{Y}} S_L\!\left(x, \{u_j\}_{j \in \mathcal{S}}\right).
\end{equation}

This asymmetric interaction transforms debate from a neutral information-mixing process into a controlled reasoning operator, forcing the system into the corrective regime.

\begin{figure*}[t] 
  \vskip 0.2in
  \begin{center}
    \centerline{\includegraphics[width=0.9\textwidth]{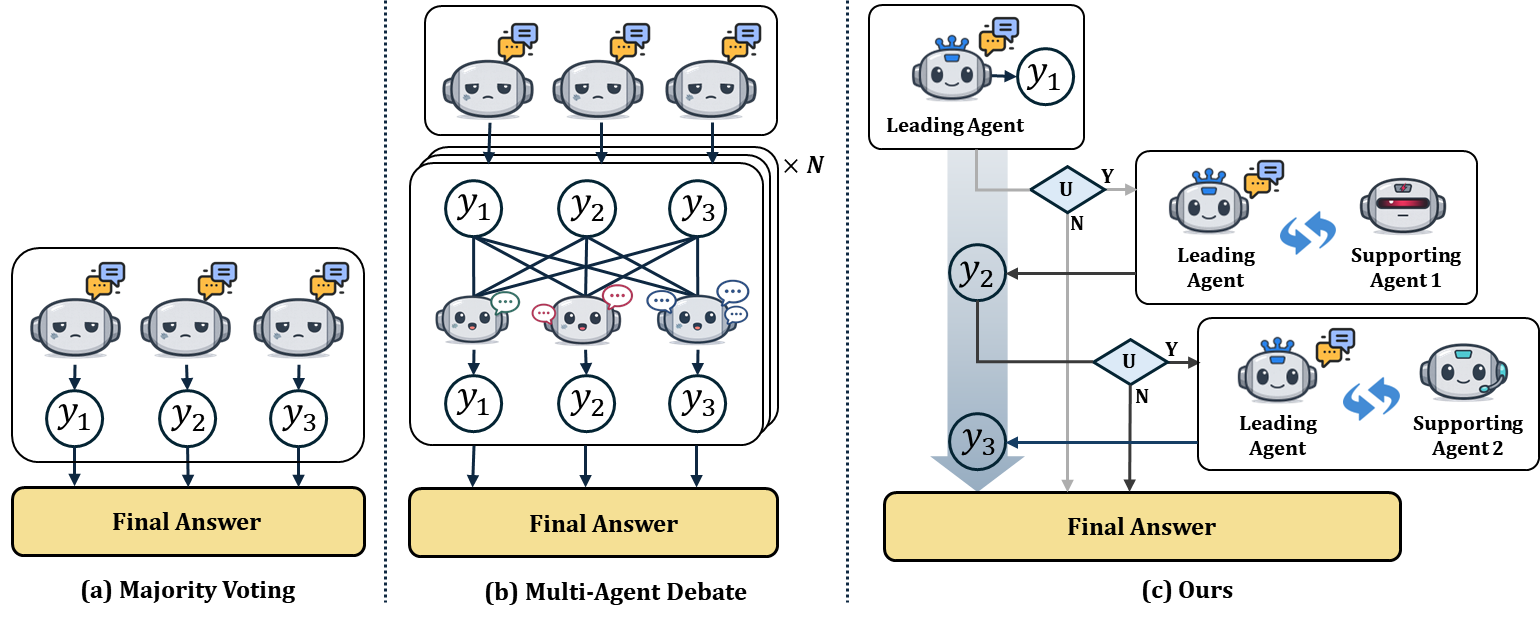}}
    \caption{Overall architectures of (a) Majority Voting, (b) Multi-Agent Debate and (c) Ours. (a) aggregates independent outputs, (b) applies uniform interaction while (c) selectively invokes structured, leader-centric interaction in corrective regimes where  $U=1$ in Eq. (7).}
    \label{fig:overview}
  \end{center}
  \vspace{-6mm} 
\end{figure*}

\section{Method}
\label{sec:theory_lase}

We propose \textbf{LASE}, a leader-centric test-time inference framework that selectively invokes interaction only when a \emph{corrective} regime is likely, while avoiding interaction in regimes where debate is expected to be neutral in expectation (cf.\ Theorem~\ref{thm:choi_martingale_concise}) or dominated by voting.
Throughout, we keep the notation from Section \ref{subsec:prior_theory}.
Without loss of generality, let the \emph{leader} be agent $L:=1$, and let $\mathcal{P}:=\{2,\dots,n\}$ denote the set of potential peers.
All proofs of our theoretical claims are provided in the Appendix.

\paragraph{Token-level predictive distributions.}
For an agent $i$ at round $t$, write its next-token distribution as
$p_{i}^{(t)}(\cdot \mid x, Z^{0:t-1})$.
For a generated response $z_i^{(t)}=(w_1,\dots,w_{m})$, define token log-probabilities
$\ell_k := \log p_i^{(t)}(w_k \mid x, Z^{0:t-1}, w_{<k})$
and token entropies
$h_k := H\!\left(p_i^{(t)}(\cdot \mid x, Z^{0:t-1}, w_{<k})\right)$.
Let $\mathcal{T}(z_i^{(t)})\subseteq\{1,\dots,m\}$ denote the token indices corresponding to the extracted answer span used by $e(\cdot)$.

\begin{definition}[Confidence proxies]
\label{def:conf_proxies}
For the leader's response $z_L^{(t)}$, define
\begin{align}
\mathrm{LP}_{\min}(z_L^{(t)}) &:= \min_{k\in \mathcal{T}(z_L^{(t)})} \ell_k, \\
\mathrm{H}_{\max}(z_L^{(t)}) &:= \max_{k\in \mathcal{T}(z_L^{(t)})} h_k,
\end{align}
and the (thresholded) uncertainty event
\begin{equation}
U(z_L^{(t)}) := \mathbf{1}\Big\{\mathrm{LP}_{\min}(z_L^{(t)}) < \tau_{\mathrm{lp}}
\ \ \lor\ \ 
\mathrm{H}_{\max}(z_L^{(t)}) > \tau_{\mathrm{h}}\Big\}.
\end{equation}
\end{definition}

\paragraph{LASE framework.}
LASE begins with a leader-only attempt:
\begin{equation}
z_L^{(0)} \sim P(z_L^{(0)} \mid x),\qquad \hat y_{\text{solo}}:=e(z_L^{(0)}).
\end{equation}
If $U(z_L^{(0)})=0$, LASE stops and outputs $\hat y_{\text{solo}}$.
Otherwise, LASE performs staged escalation with at most \emph{one} peer at a time:
(i) a low-cost \emph{adversarial} check, then (ii) a peer \emph{support} interaction, and optionally (iii) an \emph{expert} escalation.
Formally, LASE selects (possibly different) indices $A,S,E \in \mathcal{P}$ and generates
\begin{align}
z_A^{(0)} &\sim P(z_A^{(0)} \mid x, z_L^{(0)}), \\
z_L^{(1)} &\sim P(z_L^{(1)} \mid x, z_L^{(0)}, z_A^{(0)}), \\
z_S^{(0)} &\sim P(z_S^{(0)} \mid x, z_L^{(1)}), \\
z_L^{(2)} &\sim P(z_L^{(2)} \mid x, z_L^{(1)}, z_S^{(0)}),
\end{align}
with early stopping whenever $U(z_L^{(t)})=0$, and final output $\hat y_{\text{LASE}}:=e(z_L^{(t^\star)})$ at the stopping round $t^\star$.
This yields an interaction graph that is a directed star centered at the leader (one peer at a time), rather than all-to-all interaction.

\subsection{Why Selective Interaction: Reconciling Neutral vs.\ Corrective Regimes}
\label{subsec:lase_reconcile}

Theorem~\ref{thm:choi_martingale_concise} characterizes a \emph{neutral} regime under symmetry, where homogeneous fully-connected updating yields no positive expected drift toward correctness.
LASE departs from this regime in two ways: (a) \emph{topology} (leader-centric, not fully-connected), and (b) \emph{conditioning} (interaction is invoked only on the event $U=1$).

\begin{proposition}[LASE breaks symmetry conditions underlying martingale neutrality]
\label{prop:break_symmetry}
Consider a discrete-choice abstraction of the type used in \citet{choi2025debate}, with posterior-mean beliefs $p_{i,t}$ updated by neighbor information.
If interaction is restricted to a directed star centered at $L$ (i.e., only $L$ receives peer messages) and activated only on an event $U(z_L^{(t)})=1$ measurable w.r.t.\ $L$'s transcript, then the symmetry condition ``$p_{i,t-1}$ equals the average of its neighbors' beliefs" required by Theorem~\ref{thm:choi_martingale_concise} generally fails for the leader's update.
Consequently, martingale neutrality does not apply to the leader's belief trajectory under LASE.
\end{proposition}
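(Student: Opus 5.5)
The plan is to reduce the claim to the hypothesis of Theorem~\ref{thm:choi_martingale_concise} and show that, under LASE's star topology and gating, this hypothesis holds only on a non-generic (measure-zero or degenerate) set. We work in the Dirichlet-multinomial abstraction of Lemma~\ref{lem:choi_conj_concise}. The leader's posterior mean on the correct option after round $t$ is
\begin{equation}
p_{L,t} = \frac{\alpha_{L,t-1}^{\star} + c_{L,t}^{\star}}{\|\alpha_{L,t-1}\|_1 + \|c_{L,t}\|_1},
\end{equation}
where $\star$ indexes the correct option. Under LASE the neighbor set of $L$ at a given stage is a single peer $j\in\{A,S,E\}$. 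That peer's response is drawn conditionally on the leader's own transcript (e.g.\ $z_A^{(0)}\sim P(\cdot\mid x,z_L^{(0)})$), not from a peer belief exchangeable with $L$'s. The symmetry condition of Theorem~\ref{thm:choi_martingale_concise} would then read $p_{L,t-1} = p_{j,t-1}$: the leader's belief must equal that single peer's belief.

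\textbf{Step 1: failure of the symmetry condition without gating.} The peer occupies a distinct role (adversarial, support, or expert) and receives a different conditioning input. Its belief $p_{j,t-1}$ is therefore a different function of the history than $p_{L,t-1}$. Equality holds only on a degenerate set, for instance when the two Dirichlet parameter vectors are proportional in the correct coordinate. We exhibit one explicit instance, say $K=2$ with $\alpha_{L,t-1}=(1,1)$ and an adversarial peer with $\alpha_{j,t-1}=(1,3)$, for which the condition fails. In that instance the expected update gives $\mathbb{E}[p_{L,t}\mid\alpha_{L,t-1}]\neq p_{L,t-1}$, because the mean count $\mathbb{E}[c_{L,t}^\star]$ is set by the peer's belief rather than the leader's. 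This shows the condition "generally fails".

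\textbf{Step 2: failure from gating.} Even if the peers were symmetric in distribution, interaction occurs only on $\{U(z_L^{(t)})=1\}$. Conditioning on this event changes the law of the leader's state. The relevant expectation becomes $\mathbb{E}[p_{L,t}\mid\alpha_{L,t-1},U=1]$, and the state-dependent stopping rule makes the process a stopped or selected process rather than the unconditioned one in the theorem. Concretely, we will write $p_{L,t} = p_{L,t-1} + U\cdot(\text{update} - p_{L,t-1})$. The martingale identity would then require the peer's mean belief, conditioned on $U=1$, to equal $p_{L,t-1}$. Since $U$ is a function of the leader's low-confidence tokens, it is correlated with the leader being wrong. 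So this conditional equality also fails generically. A two-point example again suffices.

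The final step is to conclude that the hypothesis of Theorem~\ref{thm:choi_martingale_concise} is not met, so its conclusion cannot be invoked for the leader's trajectory. We claim only non-applicability, not positive drift. The main obstacle is making "generally fails" precise without overclaiming. I would formalize it as: the set of parameter configurations on which the symmetry condition holds is a proper subset of positive codimension in the parameter space, and one explicit counterexample witnesses non-emptiness of its complement.
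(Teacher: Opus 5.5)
Your argument is correct, and Step 1 carries it on its own. It is a more concrete route than the paper's. The paper's proof is a pure non-enforcement argument. Theorem~\ref{thm:choi_martingale_concise} supplies only a sufficient condition, and the star topology removes the exchangeable fully-connected setting that motivates it. Nothing in LASE imposes $p_{L,t-1}=\frac{1}{|\mathcal{N}_L|}\sum_{j\in\mathcal{N}_L}p_{j,t-1}$ on executed updates, so the theorem cannot be invoked. The paper never exhibits a failing configuration.

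You reduce the condition to $p_{L,t-1}=p_{j,t-1}$ for the single queried peer and note that it is a codimension-one constraint. You then give an explicit witness: with $\alpha_{L,t-1}=(1,1)$, $\alpha_{j,t-1}=(1,3)$ and one peer count, $\mathbb{E}[p_{L,t}]=5/12\neq 1/2$ (or $7/16$ if the leader also counts its own answer). This proves more than the paper does. Failure of a sufficient condition does not by itself exclude the martingale property, whereas your example shows the leader's belief genuinely drifts. Here it drifts downward, which is a good reason to claim only non-applicability.

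One caveat on framing: the codimension count is topology-agnostic. An individual agent in an all-to-all network with unrelated parameters also violates the equality generically. The link to the star should therefore come, as in the paper, from the loss of exchangeability (shared priors and a shared message pool), not from genericity alone.

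Step 2 is not needed, and as written it overreaches, so drop or restate it. Use your decomposition $p_{L,t}=p_{L,t-1}+U\,(p^{\mathrm{upd}}_{L,t}-p_{L,t-1})$, with $p^{\mathrm{upd}}_{L,t}$ the post-interaction belief. Suppose the symmetry condition held and $U$ were measurable with respect to the information the theorem conditions on, $\alpha_{L,t-1}$. Then gating could not break neutrality. On $\{U=0\}$ the belief is frozen, and on $\{U=1\}$ the theorem applies, so $\mathbb{E}[p_{L,t}\mid\alpha_{L,t-1}]=p_{L,t-1}$ in both cases. A martingale stopped by an adapted rule is still a martingale, so being a ``stopped or selected process'' is not in itself a reason for failure.

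Gating matters only in two cases:
\begin{enumerate}
\item As a population-level selection effect, once the pointwise condition already fails. Even then, turning ``$U$ is correlated with the leader being wrong'' into ``on $U=1$ the leader's $p_{L,t-1}$ lies below the peer's conditional mean'' needs a calibration-type hypothesis such as Assumption~\ref{ass:calibration}.
\item When $U$ depends on transcript information beyond $\alpha_{L,t-1}$ that correlates with the peer's count. This is plausible in LASE, since $U$ is computed from the leader's sampled tokens and the peer conditions on the leader's current response. Then $\mathbb{E}[U\,(p^{\mathrm{upd}}_{L,t}-p_{L,t-1})\mid\alpha_{L,t-1}]$ picks up a covariance term.
\end{enumerate}
Either is a legitimate supplement, but neither follows from gating alone.
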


In the latent-concept Bayes view of \citet{hu2025multiagent}, interaction can be \emph{corrective} when a strong seed exists (Assumption~\ref{ass:hu_seed_concise}), enabling amplification (Theorem~\ref{thm:hu_amp_concise}) and eventual improvements over majority vote (Theorem~\ref{thm:hu_vs_mv_concise}).
LASE aims to concentrate interaction budget on instances where such corrective conditions are likely.

\subsection{Selective Debate Improves the Performance-Cost Tradeoff}
\label{subsec:lase_pareto}

We formalize LASE as a test-time resource allocation policy and show that, under mild conditions, selective interaction yields a Pareto improvement over static debate (always interact) and static vote (always aggregate).

\paragraph{Accuracy and token cost.}
Let $\hat y_\pi$ be the output of a policy $\pi$ (e.g., LASE, static debate, majority vote), and define its expected accuracy
\begin{equation}
\mathrm{Acc}(\pi) := \mathbb{P}(\hat y_\pi = y).
\end{equation}
Let $\kappa(z)$ denote the number of generated tokens in $z$ and let $c_i>0$ be a per-token cost for agent $i$ (capturing latency/price/model size).
Define the total expected token cost
\begin{equation}
\mathrm{Cost}(\pi) := \mathbb{E}\Big[\sum_{\text{calls }(i,t)\text{ made by }\pi} c_i\,\kappa\!\big(z_i^{(t)}\big)\Big].
\end{equation}

\paragraph{Key assumptions (calibration \& conditional helpfulness).}
We use uncertainty scores to gate interaction.

\begin{assumption}[Monotone calibration of uncertainty]
\label{ass:calibration}
There exists a scalar score $s(z_L^{(t)})$ that is nondecreasing in uncertainty (e.g., a function of $\mathrm{LP}_{\min}$ and $\mathrm{H}_{\max}$) such that
$q(u):=\mathbb{P}(e(z_L^{(0)})=y \mid s(z_L^{(0)})=u)$ is nonincreasing in $u$.
\end{assumption}

We provide empirical evidence to validate Assumption \ref{ass:calibration}, as shown in Figure \ref{fig:metric_analysis}.

Let $Q(x,z_L^{(0)})\subseteq\mathcal{P}$ denote the set of peers queried by LASE on input $x$ when $U(z_L^{(0)})=1$. Define the queried-seed event
\begin{equation}
\mathcal{S}_{\mathrm{seed}}
:=
\{\exists j\in Q(x,z_L^{(0)}): z_j^{(0)} \text{ is a correct seed}\},
\end{equation}
where ``correct seed'' is understood in the sense of Assumption~\ref{ass:hu_seed_concise}. Let $\mathcal{S}_{\mathrm{seed}}^c$ denote its complement, i.e., the event that no queried peer provides a correct seed.

\begin{assumption}[Conditional seed prevalence on uncertain instances]
\label{ass:seed_prevalence}
There exists $\rho\in(0,1]$ such that, conditional on the leader being uncertain, the peer set actually queried by LASE contains at least one correct seed with probability at least $\rho$:
\begin{equation}
\mathbb{P}\big(\mathcal{S}_{\mathrm{seed}}\mid U(z_L^{(0)})=1\big)\ge \rho.
\end{equation}
The probability is taken over the data distribution, model stochasticity, and any randomness in the peer-selection rule.
\end{assumption}

\begin{assumption}[Conditional amplification under a seed]
\label{ass:amplification}
There exists $\delta>0$ such that whenever the leader conditions on at least one correct-seed peer response (as in Lemma~\ref{lem:hu_update_concise}), the probability of a correct extracted answer strictly increases by at least $\delta$:
\begin{align}    
\mathbb{P}\big(e(z_L^{(1)})=y \mid \text{a correct seed is observed}\big)
\nonumber\\ \ge\
\mathbb{P}\big(e(z_L^{(0)})=y\big) + \delta,
\end{align}
where the update from $z_L^{(0)}$ to $z_L^{(1)}$ is any single-step interaction consistent with the Bayesian mixture view (Lemma~\ref{lem:hu_update_concise}).
\end{assumption}

Assumptions~\ref{ass:seed_prevalence} - \ref{ass:amplification} are natural in the corrective regime of \citet{hu2025multiagent}: when a seed appears, posterior mass on $\theta^\star$ increases and can be amplified across refinement.
Assumption~\ref{ass:calibration} is empirically testable (e.g., via reliability plots of correctness vs.\ $s(\cdot)$).

\paragraph{Expected gain from selective interaction.}
Let $\pi_{\text{solo}}$ be the leader-only policy (stop after $z_L^{(0)}$),
$\pi_{\text{full}}$ be a static debate policy that always performs at least one peer interaction,
and $\pi_{\text{LASE}}$ be the threshold policy that performs interaction iff $U(z_L^{(0)})=1$.

\begin{assumption}[Conservativeness when no seed]
\label{ass:noharm}
When no correct seed is observed on uncertain instances, LASE is no worse in expectation than keeping the leader-only answer:
\begin{equation}
\mathbb{P}(\hat y_{\text{LASE}}=y \mid U=1, \mathcal{S}_{\mathrm{seed}}^c)\ \ge\ \mathbb{P}(\hat y_{\text{solo}}=y \mid U=1).
\end{equation}
\end{assumption}

\begin{theorem}[Selective debate yields positive expected improvement]
\label{thm:selective_gain}
Under Assumptions~\ref{ass:seed_prevalence}--\ref{ass:noharm},
\begin{equation}
\mathrm{Acc}(\pi_{\text{LASE}})
\ \ge\
\mathrm{Acc}(\pi_{\text{solo}})
\ +\
\mathbb{P}\!\left(U(z_L^{(0)})=1\right)\cdot \rho \delta.
\end{equation}
\end{theorem}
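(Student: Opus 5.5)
The plan is to decompose accuracy by the gating event and then lower-bound the uncertain branch. Both $\pi_{\text{LASE}}$ and $\pi_{\text{solo}}$ output $\hat y_{\text{solo}}$ on $\{U(z_L^{(0)})=0\}$. Conditioning on $U$ therefore gives
\begin{equation*}
\mathrm{Acc}(\pi_{\text{LASE}})-\mathrm{Acc}(\pi_{\text{solo}}) = \mathbb{P}(U=1)\Big[\mathbb{P}(\hat y_{\text{LASE}}=y\mid U=1)-\mathbb{P}(\hat y_{\text{solo}}=y\mid U=1)\Big].
\end{equation*}
It thus suffices to show that the bracket is at least $\rho\delta$. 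If $\mathbb{P}(U=1)=0$ the claim is trivial.

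Next, I split the uncertain branch by the queried-seed event. Write $a:=\mathbb{P}(\hat y_{\text{solo}}=y\mid U=1)$ and $r:=\mathbb{P}(\mathcal{S}_{\mathrm{seed}}\mid U=1)$, so that $r\ge\rho$ by Assumption~\ref{ass:seed_prevalence}. Then
\begin{equation*}
\mathbb{P}(\hat y_{\text{LASE}}=y\mid U=1)= r\,\mathbb{P}(\hat y_{\text{LASE}}=y\mid U=1,\mathcal{S}_{\mathrm{seed}}) + (1-r)\,\mathbb{P}(\hat y_{\text{LASE}}=y\mid U=1,\mathcal{S}_{\mathrm{seed}}^c).
\end{equation*}
Assumption~\ref{ass:noharm} bounds the second conditional probability below by $a$. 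For the first, I invoke Assumption~\ref{ass:amplification}, read as applying conditionally on $U=1$, to get $\ge a+\delta$. Combining, the bracket is at least $r(a+\delta)+(1-r)a-a=r\delta\ge\rho\delta$, which is the claim.

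The main obstacle is the seed branch. Assumption~\ref{ass:amplification} is stated with baseline $\mathbb{P}(e(z_L^{(0)})=y)$, not the conditional $\mathbb{P}(e(z_L^{(0)})=y\mid U=1)$. It also concerns $e(z_L^{(1)})$ rather than the final output $\hat y_{\text{LASE}}=e(z_L^{(t^\star)})$.

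For the baseline mismatch, I would first try interpreting the assumption as holding within the uncertain subpopulation, where interaction is invoked and the assumption is meant to act. Alternatively, Assumption~\ref{ass:calibration} gives $\mathbb{P}(e(z_L^{(0)})=y\mid U=1)\le\mathbb{P}(e(z_L^{(0)})=y)$, since the event $U=1$ selects higher-uncertainty scores $s$, where $q$ is nonincreasing. Hence the unconditional statement implies the conditional one in the direction we need: $a+\delta\le \mathbb{P}(e(z_L^{(0)})=y)+\delta\le \mathbb{P}(e(z_L^{(1)})=y\mid\text{seed})$.

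For the later rounds, I would argue that subsequent stages (support and expert) are either early-stopped or, by the same conservativeness reasoning as in Assumption~\ref{ass:noharm}, do not reduce correctness below that of $z_L^{(1)}$. The seed-branch bound $a+\delta$ then carries over to $\hat y_{\text{LASE}}$. I would state this interpretation explicitly in the proof.
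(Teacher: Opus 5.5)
Your argument is the paper's proof. You split accuracy on $U$ so that only the uncertain branch contributes, split that branch on the seed event, and apply amplification and no-harm to get $\mathbb{P}(\hat y_{\text{LASE}}=y\mid U=1)\ge \mathbb{P}(\hat y_{\text{solo}}=y\mid U=1)+\delta\,\mathbb{P}(S\mid U=1)$ before finishing with $\rho$. The mismatch you flag is real, and the appendix does not derive its way around it: it restates amplification (Assumption~\ref{ass:amplification_app}) directly as $\mathbb{P}(\hat y_{\text{LASE}}=y\mid U=1,S)\ge\mathbb{P}(\hat y_{\text{solo}}=y\mid U=1)+\delta$, which is exactly your first interpretation.

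Drop both fallbacks, because each one adds a hypothesis rather than following from the stated ones:
\begin{itemize}
\item The calibration route fails. Assumption~\ref{ass:calibration} is not among the theorem's hypotheses. Also, $U$ is an OR of two separate thresholds and need not be a superlevel set of the score $s$, so $\mathbb{P}(\hat y_{\text{solo}}=y\mid U=1)\le\mathbb{P}(\hat y_{\text{solo}}=y)$ does not follow.
\item The later-stage argument also fails. Assumption~\ref{ass:noharm} only controls the no-seed branch relative to the solo answer. Monotone correctness across the support and expert stages on the seed branch is therefore an extra assumption, not a consequence of it.
\end{itemize}
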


\paragraph{Cost reduction from leader-centric gating and 1:1 topology.}
Let $C_L:=\mathbb{E}[c_L\kappa(z_L^{(0)})]$ and define $C_{\text{peer}}$ as the expected incremental cost of one peer interaction (including the peer message and the leader revision).
Then
\begin{align}
\mathrm{Cost}(\pi_{\text{LASE}}) &= C_L + \mathbb{P}(U=1)\cdot C_{\text{peer}} \nonumber\\
&\le C_L + C_{\text{peer}} = \mathrm{Cost}(\pi_{\text{full}}),
\end{align}
with strict inequality whenever $\mathbb{P}(U=1)<1$.

Moreover, LASE's 1:1 topology reduces \emph{context integration} cost relative to all-to-all debate.

\begin{proposition}[Interaction-topology cost scaling]
\label{prop:topology_cost}
Consider $T$ rounds and $n$ agents.
In an all-to-all debate where each agent conditions on every other agent's latest message each round, the number of cross-agent context inclusions scales as $\Theta(Tn^2)$.
In contrast, a leader-centric 1:1 scheme that invokes at most one peer per escalation round scales as $\mathcal{O}(T)$ inclusions (independent of $n$), yielding an $\Omega(n^2)$ reduction in cross-agent context traffic at fixed $T$.
\end{proposition}

\paragraph{Pareto improvement.}
Combining Theorem~\ref{thm:selective_gain} with the cost identity above yields the central tradeoff: LASE attains nontrivial accuracy gains over leader-only inference while spending interaction budget only on uncertain instances, and it reduces cost relative to static debate whenever easy instances have non-negligible mass.

\begin{corollary}[Pareto frontier improvement]
\label{cor:pareto}
Assume $\mathbb{P}(U(z_L^{(0)})=1)\in(0,1)$ and Assumptions~\ref{ass:seed_prevalence}--\ref{ass:noharm}.
Then $\pi_{\text{LASE}}$ strictly improves upon static debate $\pi_{\text{full}}$ in expected cost, while maintaining a positive expected accuracy gain over leader-only inference.
In particular, for any fixed budget $B$ such that
$\mathrm{Cost}(\pi_{\text{LASE}})\le B < \mathrm{Cost}(\pi_{\text{full}})$,
LASE achieves an accuracy unattainable by always-debate under budget $B$, and (by Theorem~\ref{thm:selective_gain}) exceeds leader-only accuracy by at least $\mathbb{P}(U=1)\rho\delta$.
\end{corollary}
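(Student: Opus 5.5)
The corollary is a combination of two results already in hand: the cost identity preceding Proposition~\ref{prop:topology_cost}, and Theorem~\ref{thm:selective_gain}. The plan is to establish three claims in turn: strict cost improvement, the positive accuracy gain, and the budget statement.

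\emph{Step 1: strict cost improvement.} I would use the identity $\mathrm{Cost}(\pi_{\text{LASE}}) = C_L + \mathbb{P}(U=1)\,C_{\text{peer}}$ and compare it with $\mathrm{Cost}(\pi_{\text{full}}) = C_L + C_{\text{peer}}$. This gives
\begin{equation*}
\mathrm{Cost}(\pi_{\text{full}})-\mathrm{Cost}(\pi_{\text{LASE}}) = \bigl(1-\mathbb{P}(U=1)\bigr)\,C_{\text{peer}}.
\end{equation*}
The difference is strictly positive because $\mathbb{P}(U=1)<1$ and $C_{\text{peer}}>0$. The latter holds since $c_i>0$ and every peer call produces at least one token; I would state this as a nondegeneracy convention.

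\emph{Step 2: positive accuracy gain.} I would apply Theorem~\ref{thm:selective_gain} directly under Assumptions~\ref{ass:seed_prevalence}--\ref{ass:noharm}. It gives a gain of at least $\mathbb{P}(U=1)\rho\delta$ over leader-only accuracy. This quantity is strictly positive because $\mathbb{P}(U=1)>0$, $\rho>0$, and $\delta>0$.

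\emph{Step 3: the budget claim.} This is the main obstacle, because ``accuracy unattainable by always-debate under budget $B$'' must be given a precise meaning. My reading is feasibility. Since $\pi_{\text{full}}$ is a fixed static policy, its cost $\mathrm{Cost}(\pi_{\text{full}})>B$ means it is infeasible at budget $B$. So no accuracy is attained by always-debate within budget $B$. Meanwhile $\pi_{\text{LASE}}$ is feasible, since its cost is at most $B$, and it achieves $\mathrm{Acc}(\pi_{\text{LASE}}) \ge \mathrm{Acc}(\pi_{\text{solo}}) + \mathbb{P}(U=1)\rho\delta$. The budget interval is nonempty precisely by Step~1.

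I would note explicitly what this does \emph{not} show. It does not claim that LASE beats $\pi_{\text{full}}$'s unconstrained accuracy. Such a comparison would require an extra assumption relating debate on confident instances to solo accuracy, and I would flag that as beyond the stated assumptions. The corollary therefore follows by combining Steps~1--3.
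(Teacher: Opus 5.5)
Your proposal is correct and follows the paper's proof essentially step for step: the paper likewise writes $\mathrm{Cost}(\pi_{\text{full}})=C_0+C_+$ and $\mathrm{Cost}(\pi_{\text{LASE}})=C_0+\mathbb{P}(U=1)\,C_+$ with $C_+>0$. It then invokes Theorem~\ref{thm:selective_gain} for the strictly positive gain $\mathbb{P}(U=1)\rho\delta$, and reads the budget claim as infeasibility of $\pi_{\text{full}}$ versus feasibility of $\pi_{\text{LASE}}$ for any $B$ in the interval. The one point the paper makes more explicit is that $\pi_{\text{full}}$ is taken to be the always-on variant of LASE's own interaction pipeline, which is exactly what justifies the cost identity you start from.
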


\paragraph{Remark (compatibility with voting).}
LASE is orthogonal to the aggregation rule used when peers are queried.
For example, one may replace the single peer in the support stage by a small set of peers and use $\mathrm{MV}(\cdot)$ to propose a candidate answer,
while preserving the same uncertainty-gated structure.
Such hybrids correspond to using vote as an additional (possibly biased) signal to increase the conditional seed probability $\rho$ in Assumption~\ref{ass:seed_prevalence}, aligning with the ``corrective" regime of \citet{hu2025multiagent} while still avoiding the neutral regime highlighted by \citet{choi2025debate}.

\section{Experiments}
\label{sec:experiments}

\begin{table*}[t]
\centering
\caption{\textbf{Main Results on Reasoning and Knowledge Benchmarks.} We compare ours with Single Agent, Majority Vote (MV), and Multi-Agent Debate (MAD) baselines. \textbf{Ours} consistently achieves multi-agent level accuracy while maintaining near single-agent token efficiency. Best results are highlighted in \textbf{bold} (Accuracy: across all methods; Tokens: among multi-agent frameworks).}
\label{tab:main_results}
\resizebox{\textwidth}{!}{%
\begin{tabular}{ll|cccc|cccc|cccc}
\toprule
 &  & \multicolumn{4}{c|}{\textbf{MATH-500} (Math)} & \multicolumn{4}{c|}{\textbf{MATH-lv5} (Hard Math)} & \multicolumn{4}{c}{\textbf{GSM-Symbolic} (Robustness)} \\
\cmidrule(lr){3-6} \cmidrule(lr){7-10} \cmidrule(lr){11-14}
\textbf{Model} & \textbf{Metric} & Single & MV & MAD & \textbf{Ours} & Single & MV & MAD & \textbf{Ours} & Single & MV & MAD & \textbf{Ours} \\
\midrule
\multirow{3}{*}{\shortstack[l]{\textbf{Gemini}\\\textbf{2.0-Flash}}} 
 & Accuracy ($\uparrow$) & 80.0 & 86.4 & 74.8 & \textbf{87.2} & 55.7 & 56.7 & 56.0 & \textbf{57.1} & 88.4 & 91.2 & 88.6 & \textbf{93.4} \\
 & Avg. Tokens ($\downarrow$) & 0.7k & 5.2k & 11.8k & \textbf{1.1k} & 1.0k & 7.4k & 20.3k & \textbf{1.8k} & 0.4k & 2.6k & 5.6k & \textbf{0.4k} \\
 & Total Tokens ($\downarrow$) & 0.4M & 2.6M & 5.9M & \textbf{0.5M} & 1.3M & 9.7M & 26.9M & \textbf{2.4M} & 0.2M & 1.3M & 2.8M & \textbf{0.2M} \\
\midrule
\multirow{3}{*}{\shortstack[l]{\textbf{GPT}\\\textbf{4.1-mini}}} 
 & Accuracy ($\uparrow$) & 73.0 & 73.8 & 73.8 & \textbf{74.2} & 79.8 & 84.7 & 86.4 & \textbf{86.6} & 94.4 & \textbf{94.8} & 93.2 & \textbf{94.8} \\
 & Avg. Tokens ($\downarrow$) & 1.0k & 7.3k & 16.1k & \textbf{1.8k} & 1.6k & 11.7k & 29.1k & \textbf{2.6k} & 0.3k & 2.4k & 5.9k & \textbf{0.5k} \\
 & Total Tokens ($\downarrow$) & 0.5M & 3.6M & 8.0M & \textbf{0.9M} & 2.2M & 15.4M & 38.6M & \textbf{3.5M} & 0.2M & 1.2M & 2.9M & \textbf{0.2M} \\
\bottomrule
\toprule
 &  & \multicolumn{4}{c|}{\textbf{MMLU-Pro} (Knowledge)} & \multicolumn{4}{c|}{\textbf{GPQA-Diamond} (Science)} & \multicolumn{4}{c}{\textbf{TruthfulQA} (Safety)} \\
\cmidrule(lr){3-6} \cmidrule(lr){7-10} \cmidrule(lr){11-14}
\textbf{Model} & \textbf{Metric} & Single & MV & MAD & \textbf{Ours} & Single & MV & MAD & \textbf{Ours} & Single & MV & MAD & \textbf{Ours} \\
\midrule
\multirow{3}{*}{\shortstack[l]{\textbf{Gemini}\\\textbf{2.0-Flash}}} 
 & Accuracy ($\uparrow$) & 78.4 & 79.3 & 79.6 & \textbf{81.3} & 61.6 & 62.7 & 62.1 & \textbf{65.2} & 72.6 & 80.2 & 77.6 & \textbf{80.3} \\
 & Avg. Tokens ($\downarrow$) & 0.8k & 5.6k & 10.3k & \textbf{0.8k} & 1.0k & 7.5k & 17.2k & \textbf{1.2k} & 0.4k & 2.7k & 5.2k & \textbf{0.4k} \\
 & Total Tokens ($\downarrow$) & 0.4M & 2.8M & 5.2M & \textbf{0.4M} & 0.2M & 1.5M & 3.4M & \textbf{0.2M} & 0.3M & 2.2M & 4.3M & \textbf{0.3M} \\
\midrule
\multirow{3}{*}{\shortstack[l]{\textbf{GPT}\\\textbf{4.1-mini}}} 
 & Accuracy ($\uparrow$) & 79.4 & 81.5 & 81.7 & \textbf{81.8} & 67.7 & 71.2 & 72.7 & \textbf{72.9} & 79.9 & 80.9 & 80.8 & \textbf{81.1} \\
 & Avg. Tokens ($\downarrow$) & 1.0k & 6.8k & 11.7k & \textbf{1.0k} & 1.4k & 10.3k & 19.3k & \textbf{1.5k} & 0.5k & 3.4k & 5.7k & \textbf{0.5k} \\
 & Total Tokens ($\downarrow$) & 0.5M & 3.4M & 5.9M & \textbf{0.5M} & 0.3M & 2.0M & 3.8M & \textbf{0.3M} & 0.4M & 2.8M & 4.7M & \textbf{0.4M} \\
\bottomrule
\end{tabular}%
}
\vspace{-2mm} 
\end{table*}

\subsection{Experimental Setup}

\paragraph{Models and Baselines.}
We evaluate our framework using two widely used general-purpose LLMs: \textbf{Gemini-2.0-Flash} [\citeauthor{google2024gemini}] and \textbf{GPT-4.1-mini} [\citeauthor{openai2023gpt4}]. 
These models serve as the backbone for our experiments to validate the efficiency of the proposed framework. 
We compare LASE against three baselines: 1) Single Agent (standard zero-shot generation), 2) Majority Vote (MV), which samples $N=7$ independent responses and applies majority voting, and 3) Multi-Agent Debate (MAD), where $N=3$ agents engage in a multi-turn debate for 2 rounds followed by majority voting.

\paragraph{Datasets.}
We evaluate on six benchmarks spanning mathematical reasoning and general knowledge. 
For reasoning, we use MATH-500 \cite{lightman2023let}, a custom MATH-lv5 subset ($N=1,324$) containing only the highest-difficulty problems \cite{hendrycks2021measuring}, and GSM-Symbolic \cite{mirzadeh2024gsm} ($N=500$) to test robustness against memorization. 
For knowledge and safety, we employ MMLU-Pro \cite{wang2024mmlu} ($N=504$), GPQA-Diamond \cite{rein2024gpqa} ($N=198$), and TruthfulQA-MC \cite{lin2022truthfulqa} ($N=817$).

\subsection{Main Results}

\textbf{Superior Reasoning Accuracy.} As shown in Table~\ref{tab:main_results}, LASE consistently achieves state-of-the-art performance, outperforming computationally expensive ensembles (MV and MAD) on both Gemini-2.0-Flash and GPT-4.1-mini. Notably, on GPQA-Diamond, LASE reaches 65.2\% accuracy, surpassing MV (62.7\%) and MAD (62.1\%), while also establishing a clear lead on MATH-500 with 87.2\%.

\textbf{Order-of-Magnitude Efficiency Gain.} LASE achieves these gains with drastically reduced overhead, bridging the gap between multi-agent intelligence and single-agent cost. Our method reduces average token usage by over 10$\times$ compared to MAD. Specifically, on GSM-Symbolic, LASE consumes only 0.4k tokens versus MAD’s 5.6k (a 14$\times$ reduction), remaining comparable to the single-agent baseline. Even on complex tasks like MATH-500, our cost (1.1k) is a fraction of MAD (11.8k) and MV (5.2k).

\subsection{Compatibility with Majority Voting}

A common question in multi-agent reasoning is how the proposed framework relates to simple ensemble methods like Majority Voting (MV). 
We clarify that our framework is not a mutually exclusive alternative to MV, but rather an orthogonal framework that enhances the reasoning quality of each individual path.
To demonstrate this, we conducted an experiment where we run our method in parallel ($N=5$) and aggregate the final answers via majority voting (denoted as \textbf{Ours-Majority Vote}). 
As shown in Table~\ref{tab:ensemble_compatibility}, Ours-Majority Vote further boosts performance by +0.8\% compared to our single-path performance.
Crucially, because our framework is highly token-efficient (invoking agents only when necessary), running ours 5 times consumes comparable tokens to the standard MV ($N=7$). 
However, Ours-Majority Vote significantly outperforms MV. 
This suggests that ours serves as a ``better voter": by filtering out errors within each reasoning path via adaptive debate, the final ensemble becomes more robust than simply aggregating raw, unverified generations. 
Thus, ours can be flexibly scaled up with MV for scenarios demanding maximum accuracy.

\begin{table}[t]
    \caption{\textbf{Compatibility with Majority Voting (MATH-500).} comparison of standard MV ($N=7$) vs. our ensemble variant (\textbf{Ours-Majority Vote}, $N=5$).}
    \label{tab:ensemble_compatibility}
    \begin{center}
    \begin{sc}
    \resizebox{\columnwidth}{!}{%
        \begin{tabular}{lcccc}
        \toprule
        Method & $N$ & Tokens & Acc. & $\Delta$ \\ 
        \midrule
        MV & 7 & 5.2k & 86.4 & - \\
        \midrule
        \textbf{Ours (Single)} & 1 & \textbf{1.1k} & 87.2 & \textcolor{blue}{+0.8} \\
        \textbf{Ours (Majority Vote)} & 5 & 5.4k & \textbf{88.0} & \textcolor{blue}{+1.6} \\
        \bottomrule
        \end{tabular}
    }
    \end{sc}
    \end{center}
    \vspace{-8mm} 
\end{table}

\subsection{Calibration Analysis: Validating Confidence Proxies}
\label{subsec:calibration_analysis}

A core premise of our method is that the model's internal uncertainty signals can serve as reliable proxies for determining when external collaboration is necessary.
To ensure statistical robustness and model-agnostic validity, we aggregated results from two independent runs across both Gemini-2.0-Flash and GPT-4.1-mini on the MATH-500 dataset, resulting in a total of \textbf{2,000 sample points} for this analysis.
Figure~\ref{fig:metric_analysis} visualizes the relationship between our gating metrics and ground-truth accuracy based on this aggregated data.

\textbf{Strong Correlation with Performance.}
As observed in Figure~\ref{fig:logprob_analysis}, \textbf{Min Logprob} exhibits a strong positive correlation with accuracy.
High-confidence generations ($\ge 0.99$) achieve 85.5\% accuracy, whereas low-confidence outputs ($< 0.50$) drop to 50.9\%, validating Min Logprob as an effective proxy for \textit{token-level certainty}.
Conversely, Figure~\ref{fig:ent_analysis} shows that \textbf{Entropy} is inversely correlated with performance. 
Notably, when entropy exceeds 1.0 (indicating a flat distribution or "ignorance"), accuracy plummets to 30.4\%. 
This confirms that Entropy effectively captures \textit{distributional ambiguity}.

\textbf{Implications for Debate Efficiency.}
Recent theoretical works have debated the utility of multi-agent interaction. 
While some suggest debate inherently improves reasoning, others argue that without targeted intervention, debate processes can behave like random walks (Martingales) or collapse into groupthink.
Our calibration results bridge this gap. 
By demonstrating that high uncertainty (low logprob / high entropy) accurately predicts potential errors, ours avoids the inefficiency of indiscriminate debate. 
Instead of forcing agents to debate on queries they have already mastered (High Logprob) or are completely ignorant of (Max Entropy), our framework leverages these calibrated proxies to selectively trigger debate only in the \textbf{``ambiguity zone",} where peer review and expert escalation are most effective.

\begin{figure*}[t] 
    \centering
    \begin{subfigure}{0.48\textwidth} 
        \centering
        \includegraphics[width=\linewidth]{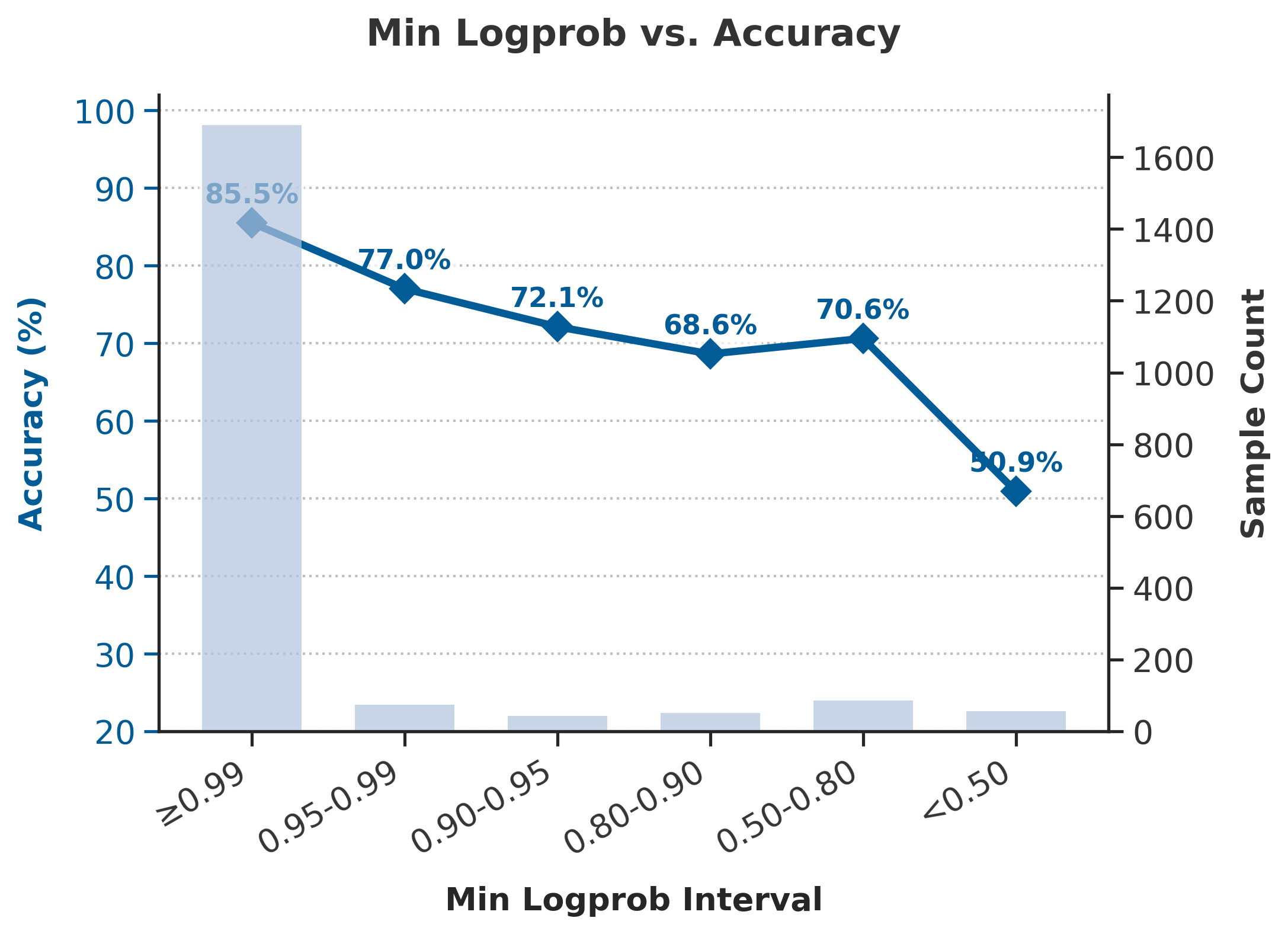}
        \caption{\textbf{Min Logprob} (Token-level Confidence)}
        \label{fig:logprob_analysis}
    \end{subfigure}
    \hfill 
    \begin{subfigure}{0.48\textwidth}
        \centering
        \includegraphics[width=\linewidth]{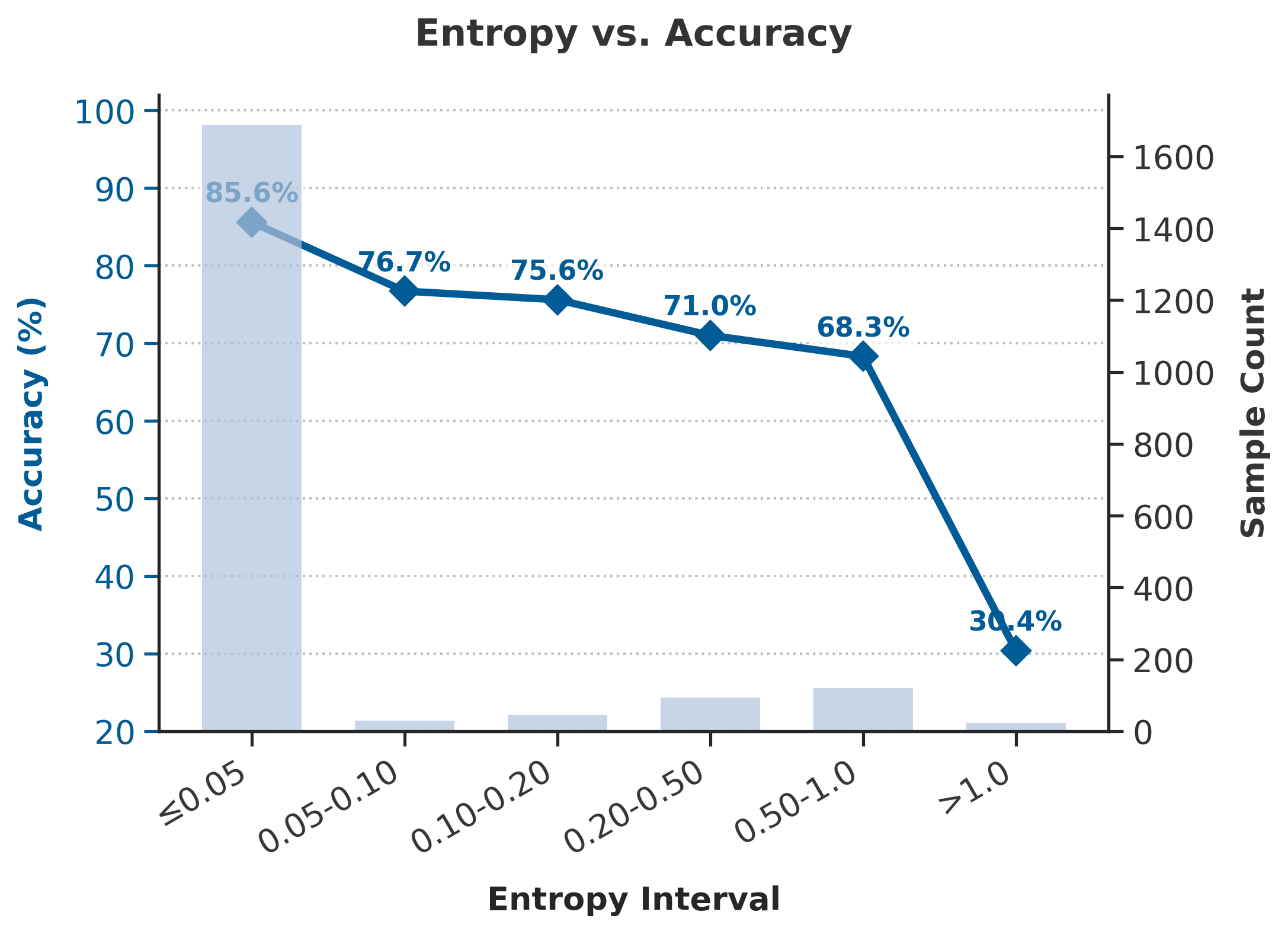}
        \caption{\textbf{Entropy} (Distributional Uncertainty)}
        \label{fig:ent_analysis}
    \end{subfigure}
    
    \caption{\textbf{Calibration Analysis of Confidence Proxies.} 
    We analyze the relationship between accuracy and our two uncertainty metrics: (a) Min Logprob and (b) Entropy. 
    The dual-axis charts show sample distribution (bars) and accuracy trend (lines). 
    Crucially, accuracy consistently declines as uncertainty increases, \textbf{empirically validating our Monotone Calibration Assumption (Assumption~\ref{ass:calibration})}. 
    This confirms that instances flagged as uncertain are indeed prone to errors, establishing these metrics as effective gating signals for our adaptive framework.}
    \label{fig:metric_analysis}
    
\end{figure*}

\begin{table}[t]
    \centering
    \caption{\textbf{Robustness to Agent Permutation (MATH-500).} 
    We report three independent runs where the supporting agent order was randomly shuffled for each query.
    The low standard deviation (0.31) and the minimal gap ($\Delta$0.3\%) from our main reported score confirm the stability of our framework.}
    \label{tab:random_permutation}
    \resizebox{0.95\columnwidth}{!}{
    \begin{tabular}{l|cc}
        \toprule
        \textbf{Setting} & \textbf{Accuracy (\%)} & \textbf{Avg. Tokens} \\
        \midrule
        Random Run 1 & 86.8 & 1.0k \\
        Random Run 2 & 87.2 & 1.2k \\
        Random Run 3 & 86.6 & 1.1k \\
        \midrule
        \textbf{Random Avg. (3 runs)} & 86.9 ${\scriptstyle (\pm 0.31)}$ & 1.1k \\
        \textit{Main Reported (Ref.)} & \textit{87.2} & \textit{1.1k} \\
        \bottomrule
    \end{tabular}
    }
    
\end{table}

\subsection{Robustness Analysis: Sensitivity to Agent Order}
\label{subsec:robustness}

Our framework leverages supporting agents with varied prompts and sampling temperatures to incorporate heterogeneous perspectives. 
To investigate whether performance relies on a specific interaction sequence, we compared our main setting against a scenario where the agent order is randomly permuted for every query. 

As shown in Table~\ref{tab:random_permutation}, the randomized order yields an accuracy of 86.9\%, which is statistically comparable to the main reported result of 87.2\%, with a low standard deviation of 0.31. 
This minimal gap indicates that our method is effectively \textbf{permutation invariant}; the efficacy stems from the \textit{presence} of diverse insights within the agent pool rather than a rigid curriculum. 
The stable token usage ($\sim$1.1k) further confirms that our adaptive gating mechanism remains robust and efficient even under stochastic interaction conditions.

\section{Related Work}
\label{sec:related_work}

\subsection{Multi-Agent Debate and Consensus Dynamics}
Multi-agent debate (MAD) has emerged as a powerful paradigm to enhance LLM reasoning by overcoming the limitations of single-agent generation through iterative critique and refinement \citep{du2023improving, liang2024encouraging, chan2023chateval}. Early works demonstrated that multiple agents could correct hallucinations and improve factuality through unstructured dialogue \citep{du2023improving}. 
Recent studies have further augmented this by assigning diverse personas or roles to agents to promote divergent thinking \citep{wang2024unleashing}.

However, theoretical analyses provide conflicting views on the efficacy of such interactions. \citet{hu2025multiagent} formulate debate as Bayesian inference, proving that with a correct initial seed, debate converges to the truth. In contrast, \citet{choi2025debate} argue that under symmetric conditions—where information flows equally among homogeneous agents—the belief update process forms a martingale, yielding no expected improvement. Our work reconciles these perspectives. We explicitly design LASE to break the symmetry condition identified by \citet{choi2025debate}. By adopting a directed star topology (Leader-Supporter) rather than an all-to-all mesh, LASE transforms neutral information mixing into a directed optimization toward correctness.

\subsection{Efficient and Adaptive Reasoning}
While effective, standard debate frameworks incur significant computational costs due to indiscriminate agent activation \citep{liu2024groupdebate}. 
To mitigate this, \citet{li2024improving} proposed sparse communication topologies; however, their approach relies on static graph structures. Other approaches have explored optimizing collaboration through Reinforcement Learning (RL) \citep{liu2024dynamic, estornell2024acc}, but these introduce substantial training overhead.

In contrast, ours focuses on \textit{test-time} efficiency without parameter updates. Unlike self-correction methods that rely solely on a single agent's internal capability \citep{shinn2023reflexion}, ours leverages intrinsic uncertainty signals (e.g., token-level log-probabilities) to dynamically allocate computation. By selectively escalating from single-agent reasoning to multi-agent collaboration only when necessary ($U=1$), LASE achieves a Pareto-optimal balance between accuracy and token cost, outperforming both static ensembles \citep{wang2022self} and fixed-topology debates.
\section{Conclusion}
\label{sec:conc}

In this work, we proposed LASE, a cost-efficient framework for adaptive multi-agent reasoning. 
Motivated by the inefficiency of indiscriminate debate, we introduced a confidence-guided mechanism that dynamically allocates computational resources only when necessary. 
By adopting a leader-supporter topology to break symmetry, LASE overcomes the theoretical limitations of martingale neutrality, inducing a directed corrective bias toward the ground truth.

\section*{Impact Statement}

This paper presents work whose goal is to advance the field of Machine Learning, specifically in efficient and reliable multi-agent reasoning. 
Our proposed framework, LASE, significantly reduces the computational cost (token consumption) associated with complex reasoning tasks, contributing to the broader goal of Green AI by lowering the energy footprint of large-scale model deployment. 
Furthermore, by integrating uncertainty gating and adversarial verification, our approach aims to mitigate hallucinations and improve the reliability of automated decision-making systems. 
While general risks associated with Large Language Models (e.g., potential for misuse or bias) remain, our work primarily focuses on enhancing the efficiency and trustworthiness of these systems without introducing specific new negative societal consequences.

\clearpage
\bibliography{reference}

@inproceedings{
hu2025multiagent,
title={Multi-Agent Debate for {LLM} Judges with Adaptive Stability Detection},
author={Tianyu Hu and Zhen Tan and Song Wang and Huaizhi Qu and Tianlong Chen},
booktitle={The Thirty-ninth Annual Conference on Neural Information Processing Systems},
year={2025},
url={https://openreview.net/forum?id=Vusd1Hw2D9}
}

@inproceedings{
choi2025debate,
title={Debate or Vote: Which Yields Better Decisions in Multi-Agent Large Language Models?},
author={Hyeong Kyu Choi and Jerry Zhu and Sharon Li},
booktitle={The Thirty-ninth Annual Conference on Neural Information Processing Systems},
year={2025},
url={https://openreview.net/forum?id=iUjGNJzrF1}
}

@inproceedings{lin2022truthfulqa,
  title={Truthfulqa: Measuring how models mimic human falsehoods},
  author={Lin, Stephanie and Hilton, Jacob and Evans, Owain},
  booktitle={Proceedings of the 60th annual meeting of the association for computational linguistics (volume 1: long papers)},
  pages={3214--3252},
  year={2022}
}

@inproceedings{rein2024gpqa,
  title={Gpqa: A graduate-level google-proof q\&a benchmark},
  author={Rein, David and Hou, Betty Li and Stickland, Asa Cooper and Petty, Jackson and Pang, Richard Yuanzhe and Dirani, Julien and Michael, Julian and Bowman, Samuel R},
  booktitle={First Conference on Language Modeling},
  year={2024}
}

@article{wang2024mmlu,
  title={Mmlu-pro: A more robust and challenging multi-task language understanding benchmark},
  author={Wang, Yubo and Ma, Xueguang and Zhang, Ge and Ni, Yuansheng and Chandra, Abhranil and Guo, Shiguang and Ren, Weiming and Arulraj, Aaran and He, Xuan and Jiang, Ziyan and others},
  journal={Advances in Neural Information Processing Systems},
  volume={37},
  pages={95266--95290},
  year={2024}
}

@article{hendrycks2021measuring,
  title={Measuring mathematical problem solving with the math dataset},
  author={Hendrycks, Dan and Burns, Collin and Kadavath, Saurav and Arora, Akul and Basart, Steven and Tang, Eric and Song, Dawn and Steinhardt, Jacob},
  journal={arXiv preprint arXiv:2103.03874},
  year={2021}
}

@inproceedings{lightman2023let,
  title={Let's verify step by step},
  author={Lightman, Hunter and Kosaraju, Vineet and Burda, Yuri and Edwards, Harrison and Baker, Bowen and Lee, Teddy and Leike, Jan and Schulman, John and Sutskever, Ilya and Cobbe, Karl},
  booktitle={The Twelfth International Conference on Learning Representations},
  year={2023}
}

@article{mirzadeh2024gsm,
  title={Gsm-symbolic: Understanding the limitations of mathematical reasoning in large language models},
  author={Mirzadeh, Iman and Alizadeh, Keivan and Shahrokhi, Hooman and Tuzel, Oncel and Bengio, Samy and Farajtabar, Mehrdad},
  journal={arXiv preprint arXiv:2410.05229},
  year={2024}
}

@misc{google2024gemini,
  author = {{Google}},
  title = {The next chapter of the Gemini era for developers},
  year = {2024},
  month = dec,
  url = {https://developers.googleblog.com/en/the-next-chapter-of-the-gemini-era-for-developers/},
  note = {Accessed: 2024-12-20}
}

@article{openai2023gpt4,
  title={{GPT}-4 Technical Report},
  author={{OpenAI}},
  journal={arXiv preprint arXiv:2303.08774},
  year={2023},
  url={https://arxiv.org/abs/2303.08774}
}

@inproceedings{du2023improving,
  title={Improving factuality and reasoning in language models through multiagent debate},
  author={Du, Yilun and Li, Shuang and Torralba, Antonio and Tenenbaum, Joshua B and Mordatch, Igor},
  booktitle={Forty-first International Conference on Machine Learning},
  year={2023}
}

@inproceedings{liang2024encouraging,
  title={Encouraging divergent thinking in large language models through multi-agent debate},
  author={Liang, Tian and He, Zhiwei and Jiao, Wenxiang and Wang, Xing and Wang, Yan and Wang, Rui and Yang, Yujiu and Shi, Shuming and Tu, Zhaopeng},
  booktitle={Proceedings of the 2024 conference on empirical methods in natural language processing},
  pages={17889--17904},
  year={2024}
}

@article{chan2023chateval,
  title={Chateval: Towards better llm-based evaluators through multi-agent debate},
  author={Chan, Chi-Min and Chen, Weize and Su, Yusheng and Yu, Jianxuan and Xue, Wei and Zhang, Shanghang and Fu, Jie and Liu, Zhiyuan},
  journal={arXiv preprint arXiv:2308.07201},
  year={2023}
}

@inproceedings{liu2024dynamic,
  title={A dynamic llm-powered agent network for task-oriented agent collaboration},
  author={Liu, Zijun and Zhang, Yanzhe and Li, Peng and Liu, Yang and Yang, Diyi},
  booktitle={First Conference on Language Modeling},
  year={2024}
}

@article{estornell2024acc,
  title={Acc-debate: An actor-critic approach to multi-agent debate},
  author={Estornell, Andrew and Ton, Jean-Francois and Yao, Yuanshun and Liu, Yang},
  journal={arXiv e-prints},
  pages={arXiv--2411},
  year={2024}
}

@article{liu2024groupdebate,
  title={Groupdebate: Enhancing the efficiency of multi-agent debate using group discussion},
  author={Liu, Tongxuan and Wang, Xingyu and Huang, Weizhe and Xu, Wenjiang and Zeng, Yuting and Jiang, Lei and Yang, Hailong and Li, Jing},
  journal={arXiv preprint arXiv:2409.14051},
  year={2024}
}

@article{wang2022self,
  title={Self-consistency improves chain of thought reasoning in language models},
  author={Wang, Xuezhi and Wei, Jason and Schuurmans, Dale and Le, Quoc and Chi, Ed and Narang, Sharan and Chowdhery, Aakanksha and Zhou, Denny},
  journal={arXiv preprint arXiv:2203.11171},
  year={2022}
}

@article{li2024improving,
  title={Improving multi-agent debate with sparse communication topology},
  author={Li, Yunxuan and Du, Yibing and Zhang, Jiageng and Hou, Le and Grabowski, Peter and Li, Yeqing and Ie, Eugene},
  journal={arXiv preprint arXiv:2406.11776},
  year={2024}
}

@inproceedings{wang2024unleashing,
  title={Unleashing the emergent cognitive synergy in large language models: A task-solving agent through multi-persona self-collaboration},
  author={Wang, Zhenhailong and Mao, Shaoguang and Wu, Wenshan and Ge, Tao and Wei, Furu and Ji, Heng},
  booktitle={Proceedings of the 2024 Conference of the North American Chapter of the Association for Computational Linguistics: Human Language Technologies (Volume 1: Long Papers)},
  pages={257--279},
  year={2024}
}

@article{shinn2023reflexion,
  title={Reflexion: Language agents with verbal reinforcement learning},
  author={Shinn, Noah and Cassano, Federico and Gopinath, Ashwin and Narasimhan, Karthik and Yao, Shunyu},
  journal={Advances in Neural Information Processing Systems},
  volume={36},
  pages={8634--8652},
  year={2023}
}

@inproceedings{wang2025mixture,
  title={Mixture-of-agents enhances large language model capabilities},
  author={Wang, Junlin and Wang, Jue and Athiwaratkun, Ben and Zhang, Ce and Zou, James Y},
  booktitle={International Conference on Learning Representations},
  volume={2025},
  pages={33944--33963},
  year={2025}
}
\bibliographystyle{icml2026}

\newpage
\appendix
\onecolumn
\section{Proofs for Theoretical Claims}
\label{app:proofs}

\paragraph{Setup and events.}
Let $L$ denote the leader and $\mathcal{P}$ the set of peers.
Let $U:=U(z_L^{(0)})\in\{0,1\}$ be the uncertainty event from Definition~\ref{def:conf_proxies}.
Write the leader-only extracted answer as $\hat y_{\text{solo}}:=e(z_L^{(0)})$ and the LASE output as $\hat y_{\text{LASE}}$.
Let $S$ denote the event that, when LASE queries a peer on $U=1$, the peer response set contains at least one \emph{correct seed}
in the sense of Assumption~\ref{ass:hu_seed_concise}.

\subsection{Auxiliary assumptions for the gain bound}

For a clean, rigorous lower bound on selective interaction, we use the following mild assumptions.

\begin{assumption}[Seed prevalence on uncertain instances]
\label{ass:seed_prevalence_app}
There exists $\rho\in(0,1]$ such that
\begin{equation}
\mathbb{P}(S \mid U=1)\ \ge\ \rho.
\end{equation}
\end{assumption}

\begin{assumption}[Conditional amplification given a seed]
\label{ass:amplification_app}
There exists $\delta>0$ such that
\begin{equation}
\mathbb{P}(\hat y_{\text{LASE}}=y \mid U=1, S)\ \ge\ \mathbb{P}(\hat y_{\text{solo}}=y \mid U=1)\ +\ \delta.
\end{equation}
\end{assumption}

\begin{assumption}[Conservativeness when no seed]
\label{ass:noharm_app}
When no correct seed is observed, LASE is (in expectation) no worse than keeping the leader-only answer:
\begin{equation}
\mathbb{P}(\hat y_{\text{LASE}}=y \mid U=1, S^c)\ \ge\ \mathbb{P}(\hat y_{\text{solo}}=y \mid U=1).
\end{equation}
\end{assumption}

\paragraph{Remark.}
Assumption~\ref{ass:noharm_app} is satisfied, for example, if the leader is allowed to retain $\hat y_{\text{solo}}$ whenever the
interaction does not provide a compelling correction. Assumption~\ref{ass:amplification_app} abstracts the ``corrective regime''
effect captured by the latent-concept Bayes view (Theorem~\ref{thm:hu_amp_concise}).

\subsection{Proof of Proposition: breaking symmetry}

\begin{proposition}[LASE breaks symmetry conditions underlying martingale neutrality]
\label{prop:break_symmetry}
Consider a discrete-choice abstraction as in \citet{choi2025debate}, with posterior-mean beliefs $p_{i,t}$ updated by neighbor
information. Under LASE's leader-centric star topology (only $L$ incorporates peer messages) together with uncertainty-gated
interaction (an update is executed only when $U=1$, where $U$ is measurable w.r.t.\ the leader transcript), the sufficient
symmetry/averaging condition required by Theorem~\ref{thm:choi_martingale_concise} is not enforced for the leader update.
Therefore, Theorem~\ref{thm:choi_martingale_concise} cannot be invoked to conclude martingale neutrality for LASE.
\end{proposition}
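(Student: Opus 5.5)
The plan is to argue by exhibiting that the hypothesis of Theorem~\ref{thm:choi_martingale_concise} is neither implied by the LASE protocol nor typically satisfied under it. The conclusion is then that the theorem's sufficient condition is unavailable, so it cannot be invoked. This is weaker than claiming a martingale fails, and it is the right logical target: the proposition only asserts non-applicability.

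\textbf{Step 1: the star topology.} Recall the mechanism behind Theorem~\ref{thm:choi_martingale_concise}. Under Lemma~\ref{lem:choi_conj_concise}, $\alpha_{i,t}=\alpha_{i,t-1}+c_{i,t}$, so the posterior mean is
\[
p_{i,t}=\frac{\alpha_{i,t-1}^{\star}+c_{i,t}^{\star}}{\|\alpha_{i,t-1}\|_1+\|c_{i,t}\|_1},
\]
where $\star$ indexes the correct option. Taking the conditional expectation, $\mathbb{E}[c_{i,t}^\star\mid\alpha_{i,t-1}]$ equals $\|c_{i,t}\|_1$ times the neighbors' average belief. Hence $\mathbb{E}[p_{i,t}\mid\alpha_{i,t-1}]=p_{i,t-1}$ holds exactly when that average equals $p_{i,t-1}$.

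Under the directed star, the leader's neighborhood at any escalation stage is a single peer $j\in\{A,S,E\}$. The condition therefore reduces to $p_{L,t-1}=p_{j,t-1}$. The peers are heterogeneous: they use different prompts and temperatures, and the adversarial peer is conditioned on $z_L^{(0)}$. So this equality is a measure-zero coincidence rather than a structural consequence. I would write the drift explicitly as
\[
\mathbb{E}[p_{L,t}-p_{L,t-1}\mid\alpha_{L,t-1}]=\frac{\|c\|_1}{\|\alpha_{L,t-1}\|_1+\|c\|_1}\bigl(p_{j,t-1}-p_{L,t-1}\bigr),
\]
which is nonzero whenever the two beliefs differ.

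\textbf{Step 2: the gating.} The update fires only on $\{U=1\}$. By Assumption~\ref{ass:calibration}, this event selects leader transcripts with lower correctness $q(u)$, that is, lower $p_{L,t-1}$. Conditioning on $U=1$ therefore tilts the leader's belief below the population's. The leader is thus systematically not at its neighbors' average: in a homogeneous peer pool one has $\mathbb{E}[p_{j,t-1}\mid U=1]\ge\mathbb{E}[p_{L,t-1}\mid U=1]$, with strict inequality when calibration is non-trivial. Since $U$ is measurable w.r.t.\ the leader transcript, this conditioning is legitimate at the time the update is applied. It yields a selection effect that breaks the symmetry even if the unconditional beliefs coincided.

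\textbf{Step 3: the obstacle.} The main difficulty is formal rather than computational. Choi's abstraction treats beliefs as Dirichlet parameters, while LASE operates on LLM transcripts. Moreover, the stopping rule makes the leader's trajectory a stopped process. An optional-stopping argument could in principle restore a martingale if the unstopped process were one. To avoid overclaiming, I would state the result purely as the failure of the sufficient hypothesis. I would supply one explicit two-option instance, e.g.\ $\alpha_{L,0}=(1,1)$ and a peer with $p_{j,0}=0.8$, in which the expected one-step drift is strictly positive. This instance shows that the neutrality conclusion genuinely does not follow, and it completes the argument.
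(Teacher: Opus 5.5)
Your proposal is correct and reaches the same deliberately weak conclusion as the paper, but by a different route.

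\textbf{How the paper argues.} The paper's proof is purely structural and involves no computation. It notes that Theorem~\ref{thm:choi_martingale_concise} supplies only a sufficient condition. It then observes that under the directed star $\mathcal{N}_L\subseteq\mathcal{P}$ (often a single peer), and peers never update on one another, so the homogeneous fully-connected setting behind the averaging condition is absent. Finally, it remarks that LASE never imposes $p_{L,t-1}=\frac{1}{|\mathcal{N}_L|}\sum_{j\in\mathcal{N}_L}p_{j,t-1}$ on the executed (gated) updates, so the theorem cannot be invoked.

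\textbf{How you argue.} You commit to the Dirichlet-multinomial one-step update and derive the drift $\frac{\|c\|_1}{\|\alpha_{L,t-1}\|_1+\|c\|_1}(p_{j,t-1}-p_{L,t-1})$. Inside the abstraction, this upgrades Choi's sufficient condition to a necessary-and-sufficient one. You then exhibit the $(1,1)$-versus-$0.8$ instance with drift $0.1$.

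\textbf{What each route buys.} Your route gives strictly more than the paper: neutrality provably fails rather than merely going unproven. The formula also shows that the sign of the drift is governed by $p_{j,t-1}-p_{L,t-1}$. So breaking symmetry yields negative drift when the queried peer is worse, which is precisely why the seed and no-harm assumptions are needed for Theorem~\ref{thm:selective_gain}. The paper's route, in exchange, never has to commit to a map from LLM transcripts to Dirichlet counts. That map is strained for, e.g., a Devil's Advocate peer that emits no answer at all. In your instance, state explicitly that the gate fires ($U=1$), which is natural at $p_{L,0}=1/2$.

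\textbf{The weak link in Step~2.} Step~2 has a gap, although the proposition does not need it.
\begin{itemize}
\item Assumption~\ref{ass:calibration} concerns only the leader's empirical correctness as a function of its own score. It says nothing about peer beliefs on $\{U=1\}$, and identifying $q(u)$ with the Dirichlet mean $p_{L,t-1}$ is itself a modeling step.
\item In a homogeneous pool whose beliefs are driven largely by instance difficulty, conditioning on the leader's uncertainty also lowers $\mathbb{E}[p_{j,t-1}\mid U=1]$. Your strict inequality therefore requires an extra assumption, for instance that the gate is conditionally independent of peer beliefs given the instance.
\item The claim that gating breaks symmetry ``even if the unconditional beliefs coincided'' holds only for equality in distribution or in expectation. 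If $p_{L,t-1}=p_{j,t-1}$ almost surely, no leader-measurable conditioning can separate them. This matches your own optional-stopping remark that gating alone cannot destroy a martingale.
\end{itemize}
The gate therefore selects on asymmetry the topology already creates rather than creating it. Present Step~2 as a heuristic, and let Step~1 together with the explicit instance carry the proof.
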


\begin{proof}
Theorem~\ref{thm:choi_martingale_concise} states a \emph{sufficient} condition: if, at an update step, an agent's belief equals the
average of its neighbors' beliefs, then the agent's next-step belief is a martingale in expectation.
Formally, let $\mathcal{N}_i$ denote the neighbor set whose messages agent $i$ incorporates at that step; the sufficient condition is
\begin{equation}
\label{eq:choi_symm}
p_{i,t-1} \;=\; \frac{1}{|\mathcal{N}_i|}\sum_{j\in\mathcal{N}_i} p_{j,t-1}.
\end{equation}

Under LASE, the interaction graph is a directed star: only the leader incorporates peer information, so on any step where interaction
occurs, $\mathcal{N}_L\subseteq\mathcal{P}$ (often $|\mathcal{N}_L|=1$), while peers do not incorporate other peers' messages in their
own updates. Thus the homogeneous fully-connected exchangeable setting used to motivate \eqref{eq:choi_symm} for all agents does not
apply.

Moreover, interaction is gated: the leader update is executed only on the event $U=1$, where $U$ is measurable w.r.t.\ the leader's
transcript. Condition \eqref{eq:choi_symm} for the leader would then require that, on every executed update,
\begin{equation}
\label{eq:leader_symm_required}
p_{L,t-1} \;=\; \frac{1}{|\mathcal{N}_L|}\sum_{j\in\mathcal{N}_L} p_{j,t-1}
\quad\text{(on the event that the update is executed).}
\end{equation}
LASE does not impose \eqref{eq:leader_symm_required} as a design constraint; it only specifies \emph{when} and \emph{from whom} the leader
receives messages (via $U$ and $\mathcal{N}_L$). Hence the sufficient symmetry condition of Theorem~\ref{thm:choi_martingale_concise} is
not guaranteed for the leader updates under LASE. Therefore, the martingale neutrality conclusion of
Theorem~\ref{thm:choi_martingale_concise} cannot be applied to characterize LASE's expected belief drift.
\end{proof}

\subsection{Proof of Theorem: selective gain}

\begin{theorem}[Selective debate yields positive expected improvement]
\label{thm:selective_gain-appendix}
Under Assumptions~\ref{ass:seed_prevalence_app}--\ref{ass:noharm_app},
\begin{equation}
\mathrm{Acc}(\pi_{\text{LASE}})
\ \ge\
\mathrm{Acc}(\pi_{\text{solo}})
\ +\
\mathbb{P}(U=1)\cdot \rho \delta.
\end{equation}
\end{theorem}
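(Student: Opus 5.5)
The plan is to decompose accuracy by the law of total probability, first over the gating event $U$ and then, on $U=1$, over the seed event $S$. The key structural observation comes from the definition of $\pi_{\text{LASE}}$: on $U=0$, LASE stops immediately and outputs $\hat y_{\text{solo}}$. Hence $\mathbb{P}(\hat y_{\text{LASE}}=y, U=0)=\mathbb{P}(\hat y_{\text{solo}}=y, U=0)$, and the two policies can differ only on $\{U=1\}$. Writing
\begin{equation*}
\mathrm{Acc}(\pi_{\text{LASE}})-\mathrm{Acc}(\pi_{\text{solo}})
=\mathbb{P}(U=1)\bigl[\mathbb{P}(\hat y_{\text{LASE}}=y\mid U=1)-\mathbb{P}(\hat y_{\text{solo}}=y\mid U=1)\bigr],
\end{equation*}
it suffices to show that the bracket is at least $\rho\delta$. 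If $\mathbb{P}(U=1)=0$ the claim is trivial, so I will assume it is positive in order for the conditionals to be defined.

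Next I will abbreviate $a:=\mathbb{P}(\hat y_{\text{solo}}=y\mid U=1)$ and $r:=\mathbb{P}(S\mid U=1)$, and split on $S$:
\begin{equation*}
\mathbb{P}(\hat y_{\text{LASE}}=y\mid U=1)=r\,\mathbb{P}(\hat y_{\text{LASE}}=y\mid U=1,S)+(1-r)\,\mathbb{P}(\hat y_{\text{LASE}}=y\mid U=1,S^c).
\end{equation*}
Assumption~\ref{ass:amplification_app} bounds the first conditional below by $a+\delta$, and Assumption~\ref{ass:noharm_app} bounds the second below by $a$. Combining these gives a lower bound of $r(a+\delta)+(1-r)a=a+r\delta$. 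Assumption~\ref{ass:seed_prevalence_app} then gives $r\ge\rho$, and since $\delta>0$ we have $r\delta\ge\rho\delta$. Substituting back into the first display yields the theorem.

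The only delicate point is bookkeeping rather than analysis. We must use the appendix versions of the assumptions, in which the amplification baseline is $\mathbb{P}(\hat y_{\text{solo}}=y\mid U=1)$ rather than the unconditional baseline in the main-text Assumption~\ref{ass:amplification}. We must also handle degenerate conditioning events. If $\mathbb{P}(S^c\mid U=1)=0$, the corresponding term simply vanishes. If $r=0$, then $\rho\le r$ forces a contradiction with $\rho>0$, so this case cannot occur. With those edge cases noted, the argument is a two-level total-probability computation.
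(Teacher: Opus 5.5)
Your proposal is correct and matches the paper's proof step for step. It uses the same split over $U$ (LASE returns $\hat y_{\text{solo}}$ on $U=0$, so only the $U=1$ term differs), the same conditioning on the seed event $S$ with the amplification and no-harm assumptions yielding $a+r\delta$, and the final step $r\ge\rho$. Your explicit handling of the degenerate conditioning events is a small tidying that the paper leaves implicit.
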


\begin{proof}
By definition, LASE outputs $\hat y_{\text{solo}}$ when $U=0$, and outputs $\hat y_{\text{LASE}}$ (after interaction) when $U=1$.
Therefore,
\begin{align}
\mathrm{Acc}(\pi_{\text{LASE}})
&= \mathbb{P}(\hat y_{\text{solo}}=y,\,U=0) \;+\; \mathbb{P}(\hat y_{\text{LASE}}=y,\,U=1) \nonumber\\
&= \mathbb{P}(U=0)\mathbb{P}(\hat y_{\text{solo}}=y\mid U=0)
\;+\; \mathbb{P}(U=1)\mathbb{P}(\hat y_{\text{LASE}}=y\mid U=1).
\label{eq:acc_lase_decomp}
\end{align}
Similarly,
\begin{equation}
\mathrm{Acc}(\pi_{\text{solo}})
= \mathbb{P}(\hat y_{\text{solo}}=y)
= \mathbb{P}(U=0)\mathbb{P}(\hat y_{\text{solo}}=y\mid U=0)
\;+\; \mathbb{P}(U=1)\mathbb{P}(\hat y_{\text{solo}}=y\mid U=1).
\label{eq:acc_solo_decomp}
\end{equation}
Subtracting \eqref{eq:acc_solo_decomp} from \eqref{eq:acc_lase_decomp} yields
\begin{equation}
\label{eq:acc_gap}
\mathrm{Acc}(\pi_{\text{LASE}})-\mathrm{Acc}(\pi_{\text{solo}})
= \mathbb{P}(U=1)\Big(\mathbb{P}(\hat y_{\text{LASE}}=y\mid U=1) - \mathbb{P}(\hat y_{\text{solo}}=y\mid U=1)\Big).
\end{equation}

Now condition on the seed event $S$:
\begin{align}
\mathbb{P}(\hat y_{\text{LASE}}=y\mid U=1)
&= \mathbb{P}(\hat y_{\text{LASE}}=y\mid U=1,S)\mathbb{P}(S\mid U=1)
\nonumber\\[-2pt]
&\qquad + \mathbb{P}(\hat y_{\text{LASE}}=y\mid U=1,S^c)\mathbb{P}(S^c\mid U=1) \nonumber\\
&\ge \Big(\mathbb{P}(\hat y_{\text{solo}}=y\mid U=1)+\delta\Big)\mathbb{P}(S\mid U=1)
+ \mathbb{P}(\hat y_{\text{solo}}=y\mid U=1)\mathbb{P}(S^c\mid U=1) \nonumber\\
&= \mathbb{P}(\hat y_{\text{solo}}=y\mid U=1) + \delta\,\mathbb{P}(S\mid U=1),
\label{eq:cond_seed_bound}
\end{align}
where the inequality uses Assumptions~\ref{ass:amplification_app} and \ref{ass:noharm_app}.
By Assumption~\ref{ass:seed_prevalence_app}, $\mathbb{P}(S\mid U=1)\ge\rho$, so
\begin{equation}
\mathbb{P}(\hat y_{\text{LASE}}=y\mid U=1)
\ \ge\ \mathbb{P}(\hat y_{\text{solo}}=y\mid U=1) + \rho\delta.
\end{equation}
Plugging this into \eqref{eq:acc_gap} yields
\begin{equation}
\mathrm{Acc}(\pi_{\text{LASE}})-\mathrm{Acc}(\pi_{\text{solo}})
\ \ge\ \mathbb{P}(U=1)\cdot \rho\delta,
\end{equation}
completing the proof.
\end{proof}

\subsection{Proof of Proposition: interaction-topology cost scaling}

\begin{proposition}[Interaction-topology cost scaling]
\label{prop:topology_cost}
Consider $T$ rounds and $n$ agents. Define an \emph{ordered cross-agent context inclusion} to be an ordered pair $(i,j)$ with $i\neq j$
such that agent $i$ conditions its generation at some round on the most recent message produced by agent $j$.
In an all-to-all debate where each agent conditions on every other agent's latest message each round, the number of ordered
cross-agent context inclusions scales as $\Theta(Tn^2)$.
In a leader-centric scheme that queries at most $k=O(1)$ peers per round and only the leader incorporates peer messages, the number of
ordered cross-agent context inclusions is $\mathcal{O}(Tk)=\mathcal{O}(T)$, independent of $n$.
\end{proposition}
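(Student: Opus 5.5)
The argument is a direct counting of the ordered pairs $(i,j)$, $i\neq j$, that occur at each round. The total is obtained by summing these per-round counts over $t=1,\dots,T$. To match the stated $\Theta(Tn^2)$, each (round, ordered pair) occurrence is counted separately; with that convention, ordered pairs are counted with multiplicity across rounds. I will say this explicitly at the outset, because otherwise a set of distinct pairs in the all-to-all case is bounded by $n(n-1)$ and cannot grow with $T$.

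\textbf{All-to-all case.} In each round every agent $i\in\{1,\dots,n\}$ conditions on the latest message of every $j\neq i$. That gives exactly $n(n-1)$ ordered inclusions per round. Summing over $T$ rounds yields exactly $Tn(n-1)$. For $n\ge 2$ we have $\tfrac12 n^2\le n(n-1)\le n^2$, so the total lies between $\tfrac12 Tn^2$ and $Tn^2$. This gives both the upper and the lower bound, hence $\Theta(Tn^2)$.

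\textbf{Leader-centric case.} In round $t$ let $Q_t\subseteq\mathcal{P}$ be the set of queried peers, with $|Q_t|\le k$. Since only the leader incorporates peer messages, the inclusions in round $t$ are the pairs $(L,j)$ with $j\in Q_t$. If a queried peer also conditions on the leader's latest message, as in the LASE prompts $z_A^{(0)}\sim P(\cdot\mid x,z_L^{(0)})$, we add the pairs $(j,L)$. Peer--peer pairs never occur. So round $t$ contributes at most $2|Q_t|\le 2k$ inclusions, and the total is at most $2kT=\mathcal{O}(Tk)$. This equals $\mathcal{O}(T)$ when $k=O(1)$, and no term depends on $n$. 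Gating by $U$ only deletes rounds, so it can only lower the count.

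\textbf{Ratio, and the main obstacle.} The ratio $Tn(n-1)/(2kT)=\Omega(n^2)$ at fixed $T$ and $k$ recovers the reduction claimed in the main text. The only real obstacle is definitional rather than technical: fixing the counting convention (multiplicity over rounds, and whether peers' conditioning on the leader counts). I will handle it by stating the convention up front and noting that either choice on the peer-to-leader direction changes the bound only by the constant factor $2$.
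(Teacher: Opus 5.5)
Your proposal is correct and follows the paper's proof essentially verbatim. Both count $n(n-1)$ ordered inclusions per round in the all-to-all case, and at most $2k$ per round (leader-on-peer plus peer-on-leader) in the leader-centric case, then sum over $T$ rounds. You add two clarifications the paper leaves implicit: inclusions must be counted with multiplicity across rounds, since distinct pairs are capped at $n(n-1)$; and the bounds $\tfrac12 n^2\le n(n-1)\le n^2$ for $n\ge 2$ give the $\Theta$ claim.
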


\begin{proof}
\textbf{All-to-all.}
At each round $t\in\{1,\dots,T\}$, each agent $i$ conditions on the latest messages from all other agents $j\neq i$.
Thus the number of ordered inclusions per round is $n(n-1)$, and over $T$ rounds it is
\begin{equation}
T\cdot n(n-1)=\Theta(Tn^2).
\end{equation}

\textbf{Leader-centric with at most $k$ peers.}
At each round, LASE queries at most $k$ peers. Each queried peer conditions on the leader's message (one inclusion per peer),
and the leader conditions on each queried peer's message (one inclusion per peer). Hence there are at most $2k$ ordered inclusions per round,
and over $T$ rounds at most $2kT=\mathcal{O}(Tk)=\mathcal{O}(T)$, independent of $n$.
\end{proof}

\subsection{Proof of Corollary: Pareto frontier improvement}

\begin{corollary}[Pareto frontier improvement]
\label{cor:pareto}
Assume $\mathbb{P}(U=1)\in(0,1)$ and Assumptions~\ref{ass:seed_prevalence_app}--\ref{ass:noharm_app}.
Let $\pi_{\text{solo}}$ be leader-only inference and let $\pi_{\text{full}}$ be an \emph{always-on} variant of LASE that executes the
same interaction pipeline that LASE would execute on $U=1$, but does so for every instance (i.e., forces interaction regardless of $U$).
Then $\mathrm{Cost}(\pi_{\text{LASE}}) < \mathrm{Cost}(\pi_{\text{full}})$ and $\mathrm{Acc}(\pi_{\text{LASE}}) > \mathrm{Acc}(\pi_{\text{solo}})$.
Moreover, for any budget $B$ with $\mathrm{Cost}(\pi_{\text{LASE}})\le B < \mathrm{Cost}(\pi_{\text{full}})$,
$\pi_{\text{full}}$ is infeasible under $B$ while $\pi_{\text{LASE}}$ is feasible and satisfies
\begin{equation}
\mathrm{Acc}(\pi_{\text{LASE}})\ \ge\ \mathrm{Acc}(\pi_{\text{solo}}) + \mathbb{P}(U=1)\rho\delta.
\end{equation}
\end{corollary}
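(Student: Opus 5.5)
The plan is to split the corollary into three claims, in order: the strict cost comparison, the strict accuracy comparison, and the budget statement. Each reduces to the cost identity stated before Proposition~\ref{prop:topology_cost} or to Theorem~\ref{thm:selective_gain-appendix}.

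\textbf{Cost.} Because $\pi_{\text{full}}$ runs exactly the pipeline that LASE runs on $\{U=1\}$, the two policies make identical calls on $\{U=1\}$. On $\{U=0\}$, LASE stops after $z_L^{(0)}$, while $\pi_{\text{full}}$ additionally incurs the peer-interaction cost. I will write both costs as expectations split over $\{U=0\}$ and $\{U=1\}$. This gives
\[
\mathrm{Cost}(\pi_{\text{full}})-\mathrm{Cost}(\pi_{\text{LASE}})=\mathbb{E}\big[C^{\text{inc}}\,\mathbf{1}\{U=0\}\big],
\]
where $C^{\text{inc}}$ is the incremental cost of the forced interaction, namely the summed $c_i\kappa(z_i^{(t)})$ over the extra calls. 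Every extra call emits at least one token and each $c_i>0$, so $C^{\text{inc}}>0$ almost surely. Together with $\mathbb{P}(U=0)=1-\mathbb{P}(U=1)>0$, the difference is strictly positive.

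I expect this step to be the main subtlety. The paper's identity $C_L+\mathbb{P}(U=1)C_{\text{peer}}$ implicitly treats the incremental cost as independent of $U$, and I do not want to rely on that. The argument above avoids it and needs only positivity of $C^{\text{inc}}$ on $\{U=0\}$. If one insists on the paper's form, it suffices to assume $\kappa\ge 1$ for each call.

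\textbf{Accuracy.} Apply Theorem~\ref{thm:selective_gain-appendix} directly, which gives
\[
\mathrm{Acc}(\pi_{\text{LASE}})\ge\mathrm{Acc}(\pi_{\text{solo}})+\mathbb{P}(U=1)\rho\delta .
\]
Here $\mathbb{P}(U=1)>0$ by hypothesis, $\rho>0$ by Assumption~\ref{ass:seed_prevalence_app}, and $\delta>0$ by Assumption~\ref{ass:amplification_app}. The gain term is therefore strictly positive, which yields strict improvement over leader-only inference.

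\textbf{Budget.} Fix $B$ with $\mathrm{Cost}(\pi_{\text{LASE}})\le B<\mathrm{Cost}(\pi_{\text{full}})$; this interval is nonempty by the cost step. By definition, $\pi_{\text{LASE}}$ is feasible under $B$ and $\pi_{\text{full}}$ is not. The accuracy bound then restates the accuracy step, so the corollary follows.
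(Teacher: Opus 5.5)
Your proof is correct and has the same three-part structure as the paper's. The accuracy step (Theorem~\ref{thm:selective_gain-appendix} plus positivity of $\mathbb{P}(U=1)$, $\rho$, $\delta$) and the budget step (feasibility by definition) are exactly the paper's. Your note that the budget interval is nonempty is a small improvement.

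The genuine difference is the cost step. The paper writes $\mathrm{Cost}(\pi_{\text{full}})=C_0+C_+$ and $\mathrm{Cost}(\pi_{\text{LASE}})=C_0+\mathbb{P}(U=1)\,C_+$ with a single unconditional $C_+>0$. This tacitly assumes $\mathbb{E}[C^{\text{inc}}\mid U=1]=\mathbb{E}[C^{\text{inc}}]$. That is questionable here: uncertain instances plausibly run more escalation stages before early stopping, so the paper's identity likely understates LASE's cost.

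Your decomposition $\mathrm{Cost}(\pi_{\text{full}})-\mathrm{Cost}(\pi_{\text{LASE}})=\mathbb{E}[C^{\text{inc}}\mathbf{1}\{U=0\}]$ needs no such independence. It also derives positivity from primitives ($c_i>0$, at least one forced call, $\kappa\ge 1$) rather than positing $C_+>0$. What the paper's version buys is an interpretable closed form, in which the saving is $(1-\mathbb{P}(U=1))\,C_+$, linear in the trigger rate. The assumption-free version of that formula is $\mathrm{Cost}(\pi_{\text{LASE}})=C_0+\mathbb{P}(U=1)\,\mathbb{E}[C^{\text{inc}}\mid U=1]$.

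One correction to your side remark: assuming $\kappa\ge 1$ gives $C_+>0$, but it does not rescue the paper's factorization. That factorization still needs mean-independence of $C^{\text{inc}}$ from $U$. The hypothesis $\kappa\ge 1$ is really what your own argument relies on.
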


\begin{proof}
\textbf{Cost comparison.}
Let $C_0$ denote the expected cost of the leader-only stage (incurred by both policies), and let $C_+>0$ denote the expected additional
interaction cost of executing the LASE interaction pipeline (peer messages plus leader updates), matching the pipeline used by $\pi_{\text{full}}$.
By construction, $\pi_{\text{full}}$ always executes the interaction pipeline, so
\begin{equation}
\mathrm{Cost}(\pi_{\text{full}})=C_0 + C_+.
\end{equation}
LASE executes interaction iff $U=1$, so
\begin{equation}
\mathrm{Cost}(\pi_{\text{LASE}})=C_0 + \mathbb{P}(U=1)\,C_+.
\end{equation}
Since $\mathbb{P}(U=1)\in(0,1)$ and $C_+>0$, we obtain $\mathrm{Cost}(\pi_{\text{LASE}})<\mathrm{Cost}(\pi_{\text{full}})$.

\textbf{Accuracy comparison.}
By Theorem~\ref{thm:selective_gain-appendix},
\begin{equation}
\mathrm{Acc}(\pi_{\text{LASE}})\ \ge\ \mathrm{Acc}(\pi_{\text{solo}}) + \mathbb{P}(U=1)\rho\delta.
\end{equation}
The improvement is strict because $\mathbb{P}(U=1)>0$, $\rho>0$, and $\delta>0$.

\textbf{Budget separation.}
Fix any $B$ such that $\mathrm{Cost}(\pi_{\text{LASE}})\le B < \mathrm{Cost}(\pi_{\text{full}})$.
Then $\pi_{\text{full}}$ is infeasible under $B$ by definition, while $\pi_{\text{LASE}}$ is feasible and attains the stated accuracy
lower bound. This establishes that LASE provides a strictly better feasible accuracy--cost point than always-on interaction at such budgets.
\end{proof}

\section{Extended Theoretical Discussion and Connection to Practice}
\label{sec:extended_theory}

In Appendix \ref{app:proofs}, we established the theoretical foundation demonstrating that LASE breaks martingale neutrality (Proposition \ref{prop:break_symmetry}) and yields positive expected improvement under specific assumptions (Theorem \ref{thm:selective_gain-appendix}). In this section, we bridge these formal guarantees with the empirical dynamics observed in our experiments, addressing how theoretical abstractions translate into practical multi-agent interactions.

\subsection{Breaking Martingale Neutrality in Practice}

Theoretical models of symmetric debate often assume that agents update their beliefs by uniformly averaging neighboring signals \citep{choi2025debate}, leading to a zero-drift martingale process. 
While Proposition \ref{prop:break_symmetry} formally shows that LASE's topology invalidates this symmetry condition, the practical implication is profound. 
Collaboration alone does not guarantee reasoning improvement; unstructured debate can often lead to stagnation or the reinforcement of incorrect reasoning (echo chambers).
LASE transforms this neutral information-mixing into a \textit{corrective update process}. By adopting a leader-centric design, the leader maintains strict control over the reasoning trajectory. Supporters do not overwrite the leader's belief symmetrically; instead, they provide targeted critiques that the leader aggregates, enforcing a directed and biased update toward error correction.

\subsection{The Practical Role of Theoretical Assumptions: Corrective Pressure}

Theorem \ref{thm:selective_gain-appendix} relies on the existence of a ``correct seed'' (Assumption \ref{ass:seed_prevalence}) and subsequent signal amplification (Assumption \ref{ass:amplification}). 
While a perfectly correct seed is an idealization, LASE operationalizes this concept robustly in practice.

The framework does not strictly require a flawless seed from a supporter to improve correctness. 
Instead, it induces a functionally equivalent \textit{corrective pressure} through structured adversarial feedback (e.g., the ``Devil's Advocate'' role detailed in Appendix \ref{appendix:prompts}). 
When the leader's confidence is low, this adversarial check actively challenges intermediate reasoning steps and exposes logical traps.
Rather than relying on a perfect initial seed to amplify, LASE relies on this targeted feedback to force the leader to revise flawed trajectories. 
Consequently, the theoretical assumption of seed amplification manifests practically as the mitigation of error propagation under uncertainty.

\subsection{Empirical Validation on the Uncertain Subset}
A core premise of LASE is that peer interaction is selectively beneficial only in uncertain regimes (where $U=1$).
This is empirically validated by isolating the source of our accuracy gains.
By design, when the uncertainty event does not occur ($U=0$), LASE completely bypasses interaction and defaults exactly to the single-agent output.

Therefore, the performance improvements observed across our benchmarks (e.g., an average gain of $+4.1\%$p across base models on MATH-500 over the single-agent baseline) are derived \textit{entirely} from the uncertain subset ($U=1$).
This direct empirical evidence confirms the theoretical motivation: peer interaction, when unstructured, may be neutral or detrimental, but when explicitly gated by confidence proxies (Definition \ref{def:conf_proxies}), it perfectly isolates the regime where corrective interaction is strictly beneficial.

\section{Datasets Details}
Our evaluation employs six benchmarks across mathematical reasoning and general knowledge.
For Mathematical Reasoning, we utilize two subsets derived from the MATH benchmark \cite{hendrycks2021measuring}. 
MATH-500 \cite{lightman2023let} is a curated test set of 500 problems widely used for evaluating rigorous reasoning steps. 
MATH-lv5 is a custom subset we constructed by filtering only "Level 5" (highest difficulty) problems, resulting in 1,324 samples. Additionally, we use GSM-Symbolic \cite{mirzadeh2024gsm}, a dataset designed by Apple to test genuine reasoning capabilities against rote memorization. 
It generates symbolic variations (altering names, numbers, and contexts) from original GSM8K problems. 
From the original pool of 5,000 samples (100 base templates $\times$ 50 variations), we sampled 5 variations per template, totaling 500 samples.
For Knowledge \& Safety, we use MMLU-Pro \cite{wang2024mmlu} ($N=504$, stratified sampling) and GPQA-Diamond \cite{rein2024gpqa} ($N=198$) to assess expert-level knowledge. 
We also employ TruthfulQA-MC \cite{lin2022truthfulqa} ($N=817$) to evaluate the model's robustness against hallucinations.

\section{Multi-Agent System Prompts}
\label{appendix:prompts}

This section details the system prompts used for each agent in our multi-agent reasoning framework. All agents are instructed to follow specific roles to enhance logical rigor and problem-solving accuracy.

\begin{table}[h!]
\centering
\small
\begin{tabular}{|p{0.95\linewidth}|}
\hline
\rowcolor[HTML]{EFEFEF} 
\textbf{1. Leading Agent} \\ \hline
\textbf{System Prompt:} \\
You are a lead problem solver. Solve the problem step by step with careful reasoning. Put your final answer in \textbackslash boxed\{\}. \\
\textbf{User Prompt:} \\
Problem: \{problem\} \\
Provide your solution step by step. \\
IMPORTANT: Put your final numerical answer inside \textbackslash boxed\{\}. The answer must be a plain number only - NO units, NO currency symbols, NO thousands separators, and NO percentage signs. Just the raw number. \\ \hline
\end{tabular}
\end{table}

\begin{table}[h!]
\centering
\small
\begin{tabular}{|p{0.95\linewidth}|}
\hline
\rowcolor[HTML]{EFEFEF} 
\textbf{2. Supporting Agent 1} \\ \hline
\textbf{System Prompt:} \\
You are a helpful collaborator. Help the problem solver by providing hints, alternative perspectives, or clarifications. \\
IMPORTANT RULES: \\
1. Do NOT solve the problem directly - guide them to the right approach. \\
2. Do NOT use \textbackslash boxed\{\} in your response - that is reserved for the lead solver. \\
3. Focus on explaining concepts, pointing out where they went wrong, or suggesting strategies. \\
\textbf{User Prompt:} \\
Problem: \{problem\} \\
Current attempt: \{post\_adversarial\_response\} \\
A reviewer raised these concerns: \{adversarial\_response\} \\
The solver seems uncertain. Provide helpful hints or perspectives to guide them toward the correct solution. Remember: Do NOT give the final answer or use \textbackslash boxed\{\}. \\ \hline
\end{tabular}
\end{table}

\begin{table}[h!]
\centering
\small
\begin{tabular}{|p{0.95\linewidth}|}
\hline
\rowcolor[HTML]{EFEFEF} 
\textbf{3. Supporting Agent 2} \\ \hline
\textbf{System Prompt:} \\
You are a senior expert consultant. A team is struggling with a difficult problem. Your role is to provide expert guidance and insights, NOT to solve the problem for them. \\
IMPORTANT RULES: \\
1. Help them understand the key concepts they're missing. \\
2. Point out flaws in their reasoning and suggest the right direction. \\
3. Do NOT provide the final answer directly. \\
4. Do NOT use \textbackslash boxed\{\} in your response - that is reserved for the lead solver. \\
\textbf{User Prompt:} \\
Problem: \{problem\} \\
Current solution attempt (after receiving support): \{post\_support\_response\} \\
Reviewer concerns: \{adversarial\_response\} \\
Collaborator's hints: \{supporting\_response\} \\
The solver is still uncertain even after receiving hints. As the expert, identify the critical insight or concept they're missing and guide them toward the correct approach. Explain WHY their current reasoning may be flawed. Remember: Do NOT give the final answer or use \textbackslash boxed\{\}. Let them solve it. \\ \hline
\end{tabular}
\end{table}

\begin{table}[ht!]
\centering
\small
\begin{tabular}{|p{0.95\linewidth}|}
\hline
\rowcolor[HTML]{EFEFEF} 
\textbf{4. Supporting Agent 3} \\ \hline
\textbf{System Prompt:} \\
You are an aggressive Devil's Advocate. Your job is to ATTACK the solution, not to validate it. \\
YOUR MISSION: \\
1. FIND THE WEAK POINT: Every solution has a vulnerability. Find it. \\
2. SET LOGICAL TRAPS: Ask questions that expose hidden assumptions or flawed reasoning. \\
3. DEMAND RIGOROUS PROOF: 'How do you KNOW this step is valid?' 'Prove it.' \\
4. CHALLENGE EDGE CASES: 'What if $x=0$? What if the denominator is zero?' \\
DO NOT: \\
- Simply say 'looks correct' or 'well done' \\
- Make up fake mathematical errors \\
- Be satisfied with hand-wavy explanations \\
- Use \textbackslash boxed\{\} - that is reserved for the lead solver \\
Your goal is to make the solver DEFEND their work rigorously. If they can't defend it convincingly, their solution is weak. \\
\textbf{User Prompt:} \\
Problem: \{problem\} \\
Proposed solution: \{leader\_response\} \\
ATTACK this solution. Find the logical trap or weak point. Ask pointed questions that force the solver to justify every step. Do NOT accept the solution at face value. Remember: Do NOT provide your own answer or use \textbackslash boxed\{\}. \\ \hline
\end{tabular}
\end{table}

\newpage

\section{Scalability to Open-Weight Models: Llama-3-8B}
\label{app:llama3}

The primary empirical evaluation in Section~\ref{sec:experiments} focused on frontier proprietary models (Gemini-2.0-Flash and GPT-4.1-mini). 
To demonstrate the generalization of our framework beyond specific model families and to investigate its behavior in low-capability regimes, we conducted an additional evaluation using a smaller, open-weight model (Llama-3-8B) on the MATH-500 benchmark.

\subsection{Performance and Efficiency}
As shown in Table \ref{tab:llama3_results}, LASE consistently outperforms the single-agent baseline, Majority Voting (MV), and Multi-Agent Debate (MAD).
While achieving the highest accuracy of 32.9\%, LASE requires significantly fewer tokens (1.4M) compared to MV (2.0M) and MAD (4.4M). This confirms that our confidence-guided, asymmetric framework extends effectively to open-weight models.

\begin{table}[h]
\centering
\caption{Performance comparison on MATH-500 using Llama-3-8B.}
\label{tab:llama3_results}
\begin{tabular}{lcc}
\toprule
\textbf{Method} & \textbf{Accuracy} & \textbf{Total Tokens} \\
\midrule
Single Agent & 25.7\% & 0.3M \\
Majority Voting (MV) & 31.8\% & 2.0M \\
Multi-Agent Debate (MAD) & 24.1\% & 4.4M \\
\midrule
\textbf{Ours (LASE)} & \textbf{32.9\%} & \textbf{1.4M} \\
\bottomrule
\end{tabular}
\end{table}

\subsection{Adaptive Triggering and Symmetric Debate Collapse}
When the base model is weaker, confidence scores naturally drop, triggering interaction more frequently. Consequently, LASE's token usage scales dynamically based on the model's capability (consuming roughly 4.6x the tokens of a single agent for Llama-3-8B, compared to 1.8x for GPT-4.1-mini). This demonstrates that our gating policy is not dormant but adaptively reallocates compute budget to compensate for higher uncertainty.

Furthermore, we observe that unstructured symmetric debate (MAD) actually degrades performance (24.1\%) compared to the single-agent baseline (25.7\%). In low-capability regimes, models often lack a correct seed (Assumption \ref{ass:seed_prevalence}), causing symmetric interactions to blindly amplify hallucinations and errors. LASE's structured, leader-centric design explicitly prevents this symmetric collapse by isolating and carefully validating peer feedback.

\section{Sensitivity Analysis and Pareto Frontier}
\label{app:sensitivity}


To ensure the robustness of our confidence-gating mechanism and verify that performance does not rely on highly specific hyperparameter configurations, we conducted a comprehensive threshold sweep for token log-probability ($\tau_{lp}$) and entropy ($\tau_{h}$) on MATH-500 using GPT-4.1-mini.

\paragraph{Robustness of the Accuracy-Cost Tradeoff.}
Table \ref{tab:threshold_sweep} presents the results of varying $\tau_{lp}$ from 0.50 to 0.95 while keeping $\tau_{h}$ fixed at 0.50. 
The accuracy remains remarkably stable, operating within a tight range of 74.1\% to 75.1\%.

Instead of drastically altering correctness, adjusting the threshold primarily scales the trigger rate (from 21.4\% to 42.8\%) and, consequently, the token usage. 
This confirms that our thresholds act as reliable control knobs along a stable accuracy-cost Pareto frontier, allowing practitioners to explicitly trade off computational budget for marginal accuracy gains without destabilizing the reasoning process.

\paragraph{Stability Across Uncertainty Metrics.}
Furthermore, this robust stability extends to the distributional uncertainty proxy as well. When sweeping the entropy threshold ($\tau_{h}$) across different values while keeping $\tau_{lp}$ fixed, we observed a similarly tight accuracy range. 
For instance, testing $\tau_{h} \in \{0.2, 0.5, 0.8\}$ yielded accuracies of 73.8\%, 74.2\%, and 74.1\%, respectively (a narrow range with a maximum gap of $\Delta = 0.4\%$p). 
This confirms that neither confidence metric relies on cherry-picked heuristics, and both consistently gate interaction in a controlled and predictable manner.

\begin{table}[h]
\centering
\caption{Threshold sweep analysis on MATH-500 (GPT-4.1-mini).}
\label{tab:threshold_sweep}
\begin{tabular}{ccccc}
\toprule
\textbf{$\tau_{lp}$} & \textbf{$\tau_{h}$} & \textbf{Accuracy} & \textbf{Avg. Tokens} & \textbf{Esc Rate ($U=1$)} \\
\midrule
0.50 & 0.50 & 74.1\% & 1.8k & 21.4\% \\
0.60 & 0.50 & 74.8\% & 1.8k & 21.2\% \\
0.80 & 0.50 & 74.2\% & 1.8k & 22.6\% \\
0.90 & 0.50 & 74.8\% & 2.6k & 30.8\% \\
0.95 & 0.50 & 75.1\% & 3.1k & 42.8\% \\
\bottomrule
\end{tabular}
\end{table}

\section{Comparison with Additional Multi-Agent Frameworks}
\label{app:additional_baselines}

To properly situate LASE within the broader landscape of LLM-agent reasoning, we provide an extended empirical comparison with recent multi-agent topologies and contextualize our approach within neighboring reasoning paradigms.

\subsection{Empirical Comparison with Mixture-of-Agents (MoA)}
Recent frameworks such as Mixture-of-Agents (MoA) \cite{wang2025mixture} achieve strong performance by routing inputs through multiple layers of diverse LLMs. However, these systems typically employ a static, always-on routing strategy, invoking the full multi-agent pipeline for every query regardless of its difficulty.

To evaluate the efficiency of our adaptive approach against such dense topologies, we conducted an additional comparison with MoA on the MATH-500 benchmark using GPT-4.1-mini. As shown in Table \ref{tab:moa_comparison}, LASE achieves an accuracy of 74.2\%, outperforming MoA (72.0\%) by +2.2\%p. Crucially, LASE accomplishes this while consuming only ${\sim}25\%$ of the computational budget (0.9M vs. 3.6M tokens). 

This significant efficiency gain highlights the advantage of confidence-guided interaction. Because LASE is not a fixed topology but an adaptive policy, it completely bypasses multi-agent interaction on highly confident queries, focusing compute exclusively on uncertain regimes. In this regard, LASE is entirely complementary to fixed-routing frameworks like MoA and could potentially be integrated as a higher-level gating mechanism to optimize their routing costs.

\begin{table}[h]
\centering
\caption{Performance and token cost comparison with Mixture-of-Agents (MoA) on MATH-500 using GPT-4.1-mini.}
\label{tab:moa_comparison}
\begin{tabular}{lcc}
\toprule
\textbf{Method} & \textbf{Accuracy} & \textbf{Total Tokens} \\
\midrule
Single Agent & 73.0\% & 0.5M \\
Mixture-of-Agents (MoA) & 72.0\% & 3.6M \\
\midrule
\textbf{Ours (LASE)} & \textbf{74.2\%} & \textbf{0.9M} \\
\bottomrule
\end{tabular}
\end{table}

\subsection{Positioning within Broader Reasoning Paradigms}
Beyond standard debate, LASE connects deeply with several prominent branches of LLM reasoning:

\paragraph{Self-Correction vs. External Feedback.} 
Self-correction frameworks \citep[e.g., Reflexion;][]{shinn2023reflexion} attempt to refine outputs through internal critique. However, models often struggle to identify their own logical flaws, leading to degenerate self-correction loops where correct answers are sometimes revised into incorrect ones. LASE mitigates this by introducing structured \textit{external} feedback through diverse supporter roles, injecting targeted corrective pressure that a single agent cannot generate independently.

\paragraph{Unifying Adaptive Routing and Verifier/Critic Systems.} 
Ultimately, LASE can be viewed as an integrated framework that unifies two distinct paradigms. It employs \textit{adaptive routing} (via token-level uncertainty gating) to decide \textit{when} to allocate additional compute, while leveraging a \textit{verifier/critic-style interaction} (via the Leader-Supporter topology) to dictate \textit{how} that compute should be structured. This combination prevents the cascading hallucination errors often seen in unstructured multi-agent setups, resulting in a highly reliable and economically scalable reasoning system.

\section{Qualitative Analysis and Failure Modes}
\label{app:failure_modes}

To better understand the boundaries of the LASE framework and address its limitations, we provide a qualitative analysis of its primary failure modes. While our confidence-guided interaction effectively mitigates many reasoning errors, the remaining failures generally fall into two distinct categories: miscalibrated overconfidence and execution failure post-critique.

\paragraph{Confident Errors (Overconfidence).}
The first failure mode occurs when the leader model produces an incorrect answer but exhibits extremely high internal confidence (e.g., $\mathrm{LP}_{\min} \ge 0.99$). In these instances, the uncertainty event does not trigger ($U=0$), and LASE intentionally bypasses the multi-agent interaction phase. As shown in our calibration analysis (Figure~\ref{fig:logprob_analysis}), a small fraction of highly confident generations are indeed incorrect.


While it might seem intuitive to force interaction on all queries to catch these errors, empirical observations reveal two critical bottlenecks. 
First, when a model is highly confident in a flawed reasoning path, it is notoriously resistant to external feedback. 
Second, because our current instantiation utilizes homogeneous agents, the supporters often share the same intrinsic capacity limits and blind spots as the leader. 
Consequently, forcing debate in these scenarios frequently results in an echo chamber where supporters merely validate the leader's hallucination, yielding diminishing returns relative to the token cost incurred. 
Therefore, leaving these confident errors uncorrected is not a flaw of the gating mechanism, but rather a deliberate accuracy-efficiency tradeoff designed to focus computational resources solely on the ambiguity zone where marginal gains are highest.

\paragraph{Execution Failure Post-Critique.}
The second failure mode occurs in the uncertain regime ($U=1$) where the gating mechanism successfully triggers interaction. In these cases, the supporting agent (e.g., the Devil's Advocate) accurately identifies a critical logical flaw or edge case, and the leader explicitly acknowledges this vulnerability. 

However, despite recognizing the error, the leader ultimately fails to arrive at the correct final answer. This typically happens because the multi-step algebraic manipulation or logical revision required to fix the error exceeds the base model's intrinsic reasoning capacity. In other words, while the external corrective pressure successfully breaks the flawed trajectory, the base model lacks the execution capability to reconstruct the correct path. This highlights a fundamental limitation of test-time interaction frameworks: their performance ceiling is ultimately bounded by the raw reasoning and execution capabilities of the underlying base model.

\section{Limitations and Future Work}
\label{sec:limitations}

In this work, we implemented the LASE framework adhering to the \textbf{theoretical minimal requirements} necessary to break the martingale symmetry and induce corrective updates. To maximize token efficiency, our current topology restricts interaction to a leader-centric 1-on-1 dynamic, activating a single supporter at a time. While this successfully demonstrates the efficacy of adaptive role differentiation, it may not fully exploit the collective intelligence available in larger ensembles.

A promising direction for future work is to generalize the supporter role. Instead of a single-agent interaction, the support mechanism could be replaced by a \textbf{multi-agent debate subsystem} or a small ensemble. For instance, the leader could query a group of supporters who debate among themselves before synthesizing a consensus signal. While this would increase computational cost, such a hierarchical expansion could provide more robust auxiliary signals ($\Delta_t$) for extremely complex tasks, further pushing the Pareto frontier of accuracy and efficiency.


\end{document}